\newtheorem{theorem}{Theorem}[section]
\newtheorem{proposition}[theorem]{Proposition}
\newtheorem{lemma}[theorem]{Lemma}
\newcommand{\argmax}{\mathop{\rm arg~max}\limits}
\newcommand{\argmin}{\mathop{\rm arg~min}\limits}
\newcommand{\captionfonts}{\normalsize}
\long\def\@makecaption#1#2{%
  \vskip\abovecaptionskip
  \sbox\@tempboxa{{\captionfonts #1: #2}}%
  \ifdim \wd\@tempboxa >\hsize
    {\captionfonts #1: #2\par}
  \else
    \hbox to\hsize{\hfil\box\@tempboxa\hfil}%
  \fi
  \vskip\belowcaptionskip}
\begin{document}
\hspace{13.9cm}1

\ \vspace{20mm}\\

{\LARGE A Fast Algorithm for the Real-Valued Combinatorial Pure Exploration of Multi-Armed Bandit}

\ \\
{\bf \large Shintaro Nakamura $^{\displaystyle 1, \displaystyle 2}$ and Masashi Sugiyama $^{\displaystyle 2, \displaystyle 1}$}\\
{$^{\displaystyle 1}$ The University of Tokyo}\\
{$^{\displaystyle 2}$ RIKEN AIP}\\
%

{\bf Keywords:} Combinatorial bandit, pure exploration, (transductive) linear bandit

\thispagestyle{empty}
\markboth{}{NC instructions}
\ \vspace{-0mm}\\
%
\begin{center} {\bf Abstract} \end{center}
We study the real-valued combinatorial pure exploration problem in the stochastic multi-armed bandit (R-CPE-MAB). We study the case where the size of the action set is polynomial with respect to the number of arms. In such a case, the R-CPE-MAB can be seen as a special case of the so-called transductive linear bandits. 
We introduce an algorithm named the combinatorial gap-based exploration (CombGapE) algorithm, whose sample complexity upper bound matches the lower bound up to a problem-dependent constant factor. We numerically show that the CombGapE algorithm outperforms existing methods significantly in both synthetic and real-world datasets.

\section{Introduction}\label{IntroductionSection}
The stochastic multi-armed bandit (MAB) model is one of the most popular models for action-making problems where we investigate trade-offs between exploration and exploitation in a stochastic environment. In this model, we are given a set of stochastic arms associated with unknown distributions. Whenever an arm is pulled, it generates a reward sampled from the corresponding distribution. The MAB model is mainly used for two objectives. One is regret minimization, or cumulative reward maximization \citep{Auer2002,Bubeck2012,Auer2002B}, where the player tries to maximize its average rewards by sequentially pulling arms by balancing \emph{exploring} all the distributions and \emph{exploiting} the most rewarding ones. The other is pure exploration \citep{Audibert2010,Gabillon2012}, where the player tries to identify the optimal arm with high probability with as few samples as possible. \par
On the other hand, it is well known that many real-world problems can be modeled as linear optimization problems, such as the shortest path problem, the optimal transport problem, the minimum spanning tree problems, the traveling salesman problem, and many more \citep{Sniedovich2006,villani2008,Pettie2002,Gutin2001}. Abstractly, these linear optimization problems can be formulated as follows:
\begin{equation*}
\begin{array}{ll@{}ll}
\text{maximize}_{\boldsymbol{\pi}}  & \boldsymbol{\mu}^{\top}\boldsymbol{\pi}&\\
\text{subject to}& \boldsymbol{\pi}\in \mathcal{A}\subset\mathbb{R}^d,   &
\end{array} 
\end{equation*} 
where $\boldsymbol{\mu}\in\mathbb{R}^d$ is a given vector that specifies the cost, $d (\geq 2)$ is a positive integer, $\top$ denotes the transpose, and $\mathcal{A}$ is a set of feasible solutions. For instance, if we see the shortest path problem in Figure~\ref{ShortestPathFigure}, each edge $i \in \{1, \ldots, 7\}$ has a cost $\mu_i$ and $\mathcal{A} = \{(1, 0, 1, 0, 0, 1, 0), (0, 1, 0, 1, 0, 1, 0), (0, 1, 0, 0, 1, 0, 1)\}$. The optimal transport problem shown in Figure \ref{OT_figure} can also be formulated similarly to the above. We have five suppliers and four demanders. Each supplier $i$ has $s_i$ goods to supply. Each demander $j$ wants $d_j$ goods. Each edge $\mu_{ij}$ is the cost to transport goods from supplier $i$ to demander $j$. Our goal is to minimize $\sum_{i = 1}^5 \sum_{j = 1}^4 \pi_{ij}\mu_{ij}$ where $\pi_{ij}(\geq 0)$ is the number of goods transported to demander $j$ from supplier $i$. 

\begin{figure}[t]
    \centering
    \includegraphics[width = \linewidth/3]{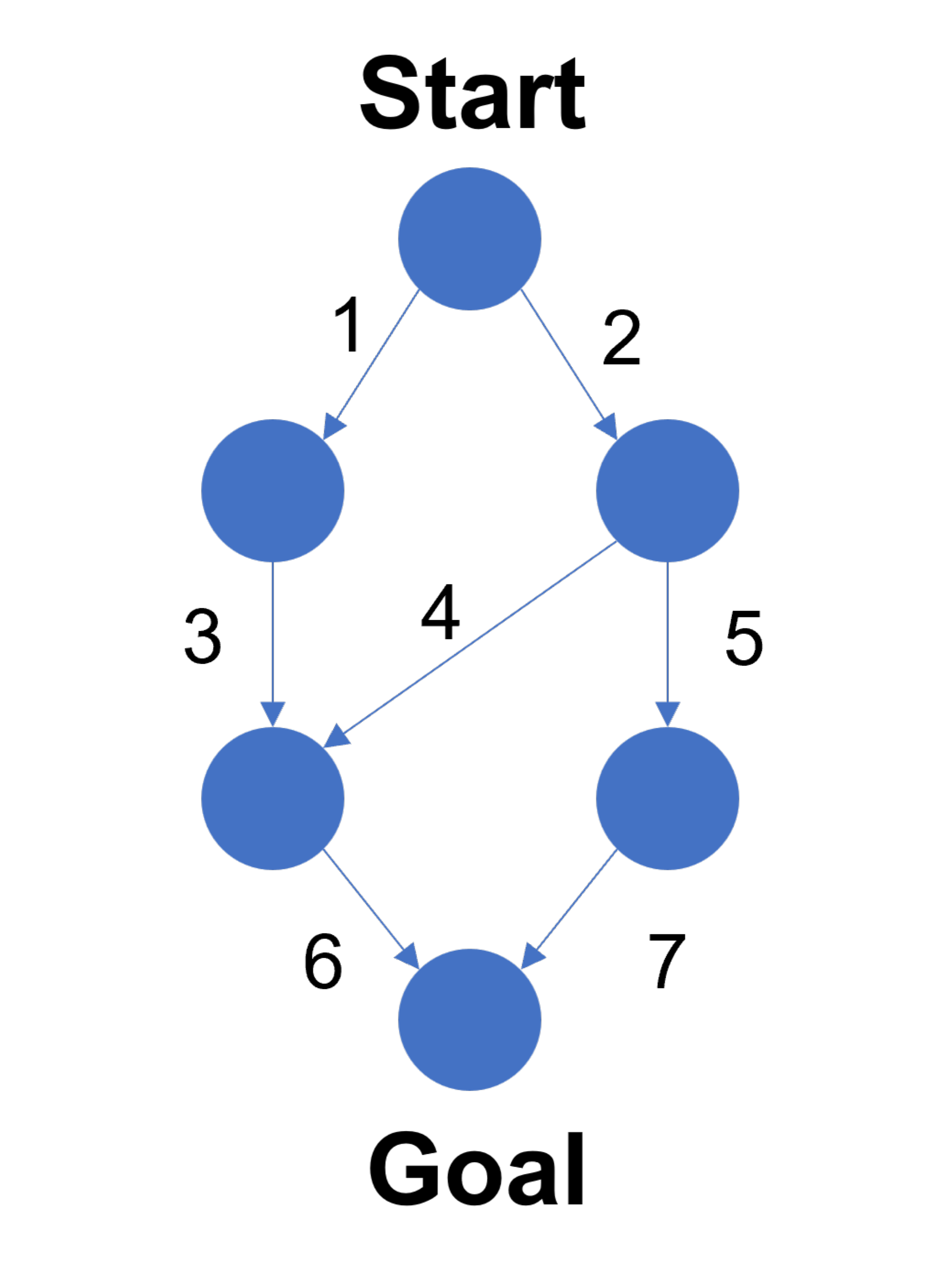}
    \caption{A simple sketch of the shortest path problem. One candidate of $\boldsymbol{\pi}$ can be $\boldsymbol{\pi} = \left(1, 0, 1, 0, 0, 1, 0 \right)^{\top}$.}
    \label{ShortestPathFigure}
\end{figure}
\begin{figure}
    \centering
    \includegraphics[width = \linewidth]{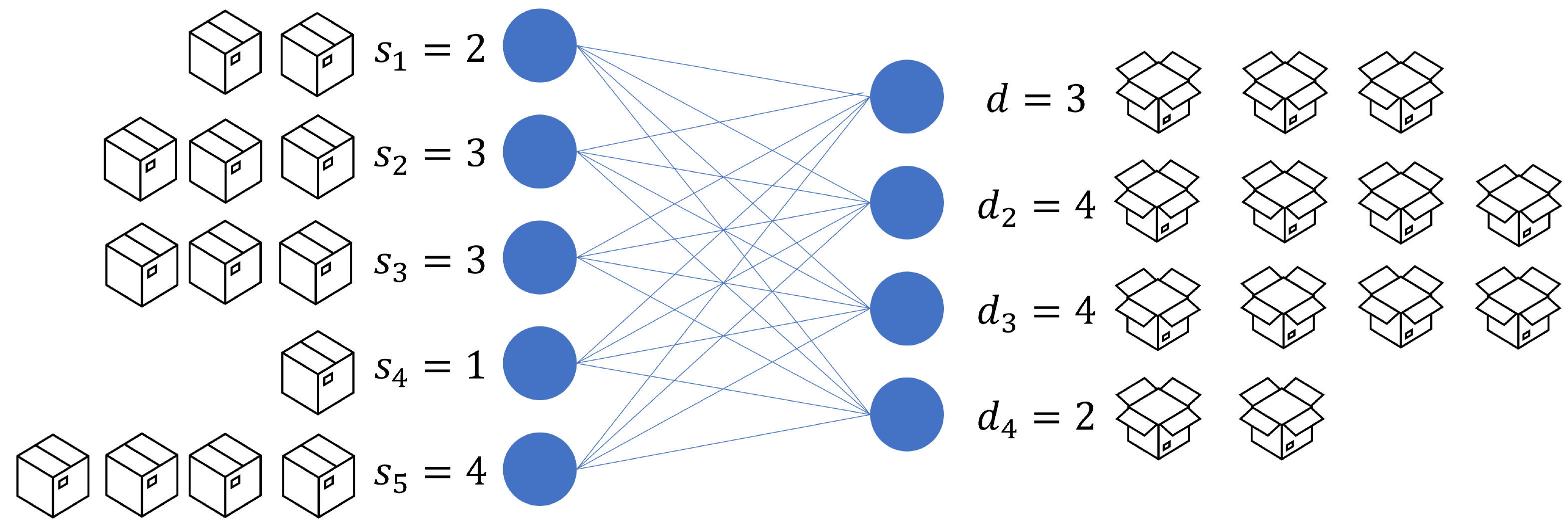}
    \caption{A simple sketch of the optimal transport problem. One candidate of $\boldsymbol{\pi}$ can be 
        $\boldsymbol{\pi} = \begin{psmallmatrix}
        2 & 0 & 0 & 0 \\
        1 & 2 & 0 & 0 \\
        0 & 2 & 1 & 0 \\
        0 & 0 & 1 & 0 \\
        0 & 0 & 2 & 2 \\
        \end{psmallmatrix}.
        $ }
\label{OT_figure}
\end{figure}

In real-world applications, the cost can often be random variables. For instance, in the shortest path problem, each edge (road) can be congested, and therefore the cost stochastically changes. Here, our main interest is to identify the best action $\boldsymbol{\pi}^{*}\in\mathcal{A}$ which maximizes the expectation of the objective $\boldsymbol{\mu}^{\top}\boldsymbol{\pi}$ in the MAB framework. \par
There are many MAB frameworks where the goal is to identify the best action for a linear objective in a stochastic environment \citep{Soare2014,Xu2018,Kuroki2020,Rejwan2020,Jourdan2021,SChen2014,YihanDu2021} (Table \ref{Taxonomy_of_framework}). Our work focuses on the combinatorial pure exploration of the MAB (CPE-MAB) framework \citep{SChen2014,WangAndZhu,Gabillon16}. In CPE-MAB, the player pulls a certain arm $i$ in each round and observes its reward. The player's goal is to identify the best action $\boldsymbol{\pi}^{*} \in\mathcal{A}$ with the smallest number of rounds. \par 
Most of the existing works in CPE-MAB \citep{SChen2014,WangAndZhu,Gabillon16,LChen2017,YihanDu2021,LChen2016} assume $\mathcal{A}\subset\{0, 1\}^d$, which means the player's objective is to identify the best action which maximizes the sum of the expected rewards. Although we can apply this model to the shortest path problem \citep{Sniedovich2006}, top-$K$ arms identification \citep{Kalyanakrishnan2010}, matching \citep{Gibbons1985}, and spanning trees \citep{Pettie2002}, we cannot apply it to problems where $\mathcal{A} \subset \mathbb{R}^d$ such as the optimal transport problem \citep{villani2008}, the knapsack problem \citep{dantzig2007}, and the production planning problem \citep{Pochet2010}. \par
In this paper, we study the real-valued CPE-MAB (R-CPE-MAB), where the action set is $\mathcal{A}\subset \mathbb{R}^{d}$. For the R-CPE-MAB, \citet{nakamura2023thompson} proposed a Thompson sampling-based algorithm named the Generalized Thompson Sampling Explore (GenTS-Explore) algorithm, which works even when the size of the action set is exponentially large in $d$. At each round $t$, the GenTS-Explore algorithm samples $M(\delta, t)$ estimations of $\boldsymbol{\mu}$, $\left\{ \tilde{\theta}_{i}(t) \right\}_{i = 1, \ldots, M(\delta, t)}$, from a Gaussian distribution, and computes $\left\{ \tilde{\boldsymbol{\pi}}^{i}(t) \right\}_{i = 1, \ldots, M(\delta, t)}$ by using the \emph{offline oracle} that computes $\argmax_{\boldsymbol{\pi} \in \mathcal{A}} \boldsymbol{\nu}^{\top} \boldsymbol{\pi}$ efficiently, once $\boldsymbol{\nu}$ is given. The GenTS-Explore algorithm will terminate if $\hat{\boldsymbol{\pi}} = \tilde{\boldsymbol{\pi}}^{k}(t)$ for all $k \in \{1, \ldots, M(\delta, t)  \}$, and output $\hat{\boldsymbol{\pi}}(t)$ as the final answer. However, since $M(\delta, t)$ is set large, the sample complexity becomes large practically.
On the other hand, if we assume the size of the action set is polynomial in $d$, the R-CPE-MAB reduces to a special case of the transductive linear bandits \citep{Fiez2019,KatzSamuels2020}. To the best of our knowledge, we mainly have two algorithms from the transductive linear bandits that work for the R-CPE-MAB, the Randomized Adaptive Gap Elimination (RAGE) \citep{Fiez2019} and Peace \citep{KatzSamuels2020} algorithms. One advantage of these algorithms is that the upper bounds match the lower bound up to a logarithmic factor. They both have multiple phases and in each phase $P$, they define $T(P) \in \mathbb{N}$ and pull each arm $i$ $T_{i}(P)$ times, where $\sum\limits_{i = 1}^{d} T_{i}(P) = T(P)$. However, $T(P)$ is often larger than necessary in practice, and eventually, has a large sample complexity. \par
In this paper, we propose an algorithm named the CombGapE (Combinatorial Gap-Based Exploration) algorithm for the R-CPE-MAB for the case where the size of the action set $\mathcal{A}$ is polynomial in $d$. In Appendix \ref{PolynomialAppendix}, we show some examples where the size of $\mathcal{A}$ is polynomial in $d$ due to some prior knowledge or the nature of the problem instance. Although the upper bound of the sample complexity of the CombGapE algorithm has an extra problem-dependent constant factor compared with a lower bound, it significantly outperforms existing methods for the R-CPE-MAB in practice.

The remainder of this paper is structured as follows.
In Section \ref{ProblemFormulation}, we formally introduce the R-CPE-MAB. In Sections \ref{ArmSelectionStrategySection}, we explain why naively modifying existing methods for CPE-MAB may not be suited for R-CPE-MAB, and give a new arm selection strategy for R-CPE-MAB. In Section \ref{Algorithm_and_TheoreticalAnalysis}, we show our algorithm inspired by \citet{Xu2018} and theoretically analyze the sample complexity of our algorithm. Finally, we experimentally compare our algorithm to some existing works in Section~\ref{ExperimentSection} and conclude our work in Section \ref{Conclusion}. \par

\textbf{Related Works and the Location of this Work} \par

\begin{table*}[t] \label{Taxonomy_of_framework}
\caption{Taxonomy of MAB problems. We compare the pure exploration in linear bandits (PE-LB), combinatorial pure exploration with full-bandit linear feedback, combinatorial pure exploration with semi-bandit linear feedback (CPE-S-LB), combinatorial pure exploration of multi-armed bandit (CPE-MAB), and the real-valued combinatorial pure exploration of multi-armed bandit (R-CPE-MAB).}
\label{model-table}
\begin{center}
\begin{tabular}{ccccc}
\toprule
& Player's behavior & Observation & Characteristic of $\mathcal{A}$ \\
\midrule
PE-LB  & Play an action $\boldsymbol{\pi}(t)$ & $\boldsymbol{\mu}^{\top}\boldsymbol{\pi}(t) + \epsilon_t$  & $\mathcal{A}\subset\mathbb{R}^d$\\
CPE-F-LB & Play an action $\boldsymbol{\pi}(t)$ & $\sum_{i:\boldsymbol{\pi}_i(t) = 1}r_i$ & $\mathcal{A}\subset\{0, 1\}^d$ \\
CPE-S-LB & Play an action $\boldsymbol{\pi}(t)$ & 
\begin{tabular}{c}
    $\{r_i \mid \forall i \in [d],$ \\
    $\pi_i(t) = 1 \}$
\end{tabular}   & $\mathcal{A}\subset\{0, 1\}^d$\\
CPE-MAB & Pull an arm $i$ & $r_i$  & $\mathcal{A}\subset\{0, 1\}^d$ \\
\begin{tabular}{c}
    R-CPE-MAB \\
    (this work)
\end{tabular}  & Pull an arm $i$ & $r_i$ & $\mathcal{A}\subset\mathbb{R}^d$ \\
\bottomrule
\end{tabular}
\end{center}
\end{table*}

Here, we introduce models that investigate the best action identification in linear objective optimization with an MAB framework (Table \ref{model-table}). Then, we briefly explain the location of this work in the literature. \par
\textbf{Pure Exploration in Linear Bandits (PE-LB)} In PE-LB \citep{Soare2014,Xu2018}, we are given in advance a set of actions whose size is polynomial in $d$. Then, in each time step~$t$, the player chooses an action $\boldsymbol{\pi}(t)$ from the action set and observes a reward with noise, i.e., $r(t) = \boldsymbol{\mu}^{\top}\boldsymbol{\pi}(t) + \epsilon(t)$, where $\epsilon(t)$ is a random noise from a certain distribution. Since $\boldsymbol{\mu}$ is typically treated as an unknown vector, many algorithms estimate $\boldsymbol{\mu}$ by leveraging the sequence of action selections. For instance, \citet{Soare2014} used the least-squares estimator to estimate $\boldsymbol{\mu}$. \par
\textbf{Combinatorial Pure Exploration with Full-Bandit Linear Feedback (CPE-F-BL)} In CPE-F-BL \citep{Kuroki2020,Rejwan2020}, arms are often called \emph{base arms}, and the actions are often called \emph{super arms}. Super arms are subsets of base arms, therefore $\mathcal{A}\subset\{0, 1\}^d$. In each time step, the player chooses an action and observes the sum of rewards from the base arms involved in the chosen action. Therefore, CPE-F-BL can be seen as an instance of PE-LB. However, since the running time of existing methods for PE-LB has polynomial dependence on the size of action space, they are not suitable for CPE-F-BL, where the size of the action space $\mathcal{A}$ can be exponentially large with respect to $d$. To cope with this problem, algorithms specifically designed for CPE-F-BL have been explored in the literature \citep{Kuroki2020,Rejwan2020,Du2021}.\par
\textbf{Combinatorial Pure Exploration with Semi-Bandit Linear Feedback (CPE-S-BL)} CPE-S-BL \citep{Jourdan2021} is a framework similar to CPE-F-BL: In each round, the player chooses an action from the set of actions (super arms) $\mathcal{A}\subset\{0, 1\}^d$, which is a subset of base arms. Yet, the observation is different. The player observes rewards $r_i$ for every base arm $i$ where $\pi_i = 1$. For instance, in Figure \ref{ShortestPathFigure}, if the player chooses $\boldsymbol{\pi} = (1, 0, 1, 0, 0, 1, 0)$, she can observe the rewards of base arms 1, 3, and 6. \par 

\textbf{Transductive Linear Bandit Problem} \citet{Fiez2019} introduced a framework named the \emph{transductive linear bandit problem}: given a set of \emph{measurement vectors} $\mathcal{X} \subset \mathbb{R}^{d}$, a set of actions $\mathcal{A} \subset \mathbb{R}^{d}$, a fixed confidence $\delta$, and an unknown vector $\boldsymbol{\mu} \subset \mathbb{R}^{d}$, a player tries to identify $\argmax_{\boldsymbol{\pi}\in \mathcal{A}} \boldsymbol{\mu}^{\top} \boldsymbol{\pi}$ with probability more than $1 - \delta$ by sequentially choosing a measurement vector $\boldsymbol{x}(t) \in \mathcal{X}$ in each round $t$ and observing a stochastic random variable whose expected value is ${\boldsymbol{x}(t)}^{\top} \boldsymbol{\mu}$. When $\mathcal{X} = \mathcal{A}$, this setting reduces to the linear bandits, and when $\mathcal{X}$  is the standard basis vectors and $\mathcal{A} \subset \mathbb{R}^{d}$, this setting reduces to the R-CPE-MAB. Therefore, we can apply existing algorithms from the transductive linear bandit, such as the RAGE (Randomized Adaptive Gap Elimination) \citep{Fiez2019} and Peace \citep{KatzSamuels2020} algorithms for the R-CPE-MAB. \par
\textbf{Contribution of this Work}\par
In this work, we propose the CombGapE (combinatorial gap-based exploration) algorithm that significantly outperforms existing methods for the R-CPE-MAB in practice. Also, we theoretically analyze the sample complexity upper bound of it and show that it matches a lower bound up to a problem-dependent constant factor. We also discuss the difference between the R-CPE-MAB and the ordinary CPE-MAB and show both quantitatively and qualitatively that naively applying algorithms in the literature of the ordinary CPE-MAB is not a good choice for the R-CPE-MAB. Finally, we numerically show that the CombGapE algorithm outperforms existing methods, taking an example of the knapsack problem.

\section{Problem Formulation} \label{ProblemFormulation}
In this section, we formally define our R-CPE-MAB model similar to \citet{SChen2014}. Suppose we have $d$ arms, numbered $1, \ldots, d$. Assume that each arm $s\in[d]$ is associated with a reward distribution $\phi_s$, where $[d] = \{1, \ldots, d\}$. We assume all reward distributions have $R$-sub-Gaussian tails for some known constant $R>0$. 
Formally, a random variable $X$ is said to be $R$-sub Gaussian if $\mathbb{E}[\exp (tX - t\mathbb{E}[X])] \leq \exp (R^2t^2/2)$ is satisfied for all $t\in \mathbb{R}$. 
It is known that the family of $R$-sub-Gaussian tail distributions includes all distributions that are supported on $[0, R]$ and also many unbounded distributions such as Gaussian distributions with variance $R^2$ \citep{Rivasplata2012}.
Let $\boldsymbol{\mu} = (\mu_1, \ldots, \mu_d)^{\top}$ denote the vector of expected rewards, where each element $\mu_s = \mathbb{E}_{X\sim\phi_s}[X]$ denotes the expected reward of arm $s$. We denote the number of times arm $s$ is pulled before round $t$ by $T_s(t)$, and by $\boldsymbol{\hat{\mu}}(t) = (\hat{\mu}_1(t), \ldots, \hat{\mu}_d(t))^{\top}$ the vector of sample means of each arm in round $t$.\par
We define the action class $\mathcal{A} = \{\boldsymbol{\pi}^1, \ldots, \boldsymbol{\pi}^K\ | \boldsymbol{\pi}^1, \ldots, \boldsymbol{\pi}^{K} \in \mathbb{R}^d \} $ as the set of all actions whose size is $K$. We assume $K$ is polynomial in $d$. Let $\boldsymbol{\pi}^* = \argmax_{\boldsymbol{\pi}\in\mathcal{A}}\boldsymbol{\mu}^{\top}\boldsymbol{\pi}$ denote the optimal member in the action class $\mathcal{A}$ which maximizes $\boldsymbol{\mu}^{\top}\boldsymbol{\pi}$. Let $a^{*}$ be the index of $\boldsymbol{\pi}^{*}$, i.e., $\boldsymbol{\pi}^{*} = \boldsymbol{\pi}^{a^{*}}$. We denote the true gap and the estimated gap in time step $t$ by $\Delta_{ij} = \boldsymbol{\mu}^{\top}(\boldsymbol{\pi}^{i} - \boldsymbol{\pi}^j)$ and $\hat{\Delta}_{ij}(t) = {\hat{\boldsymbol{\mu}}(t)}^{\top}(\boldsymbol{\pi}^{i} - \boldsymbol{\pi}^{j})$, respectively. Moreover, for all $s \in [d]$, we define $\Delta_{(s)} = \argmax_{\boldsymbol{\pi} \in \mathcal{A} \setminus \{ \boldsymbol{\pi}^{*} \}} \frac{ \boldsymbol{\mu}^{\top} \left( \boldsymbol{\pi}^{*} - \boldsymbol{\pi}  \right)}{\left| \pi^{*}_{s} - \pi_{s} \right|}$. \par
The player's objective is to identify $\boldsymbol{\pi}^{*}$ from $\mathcal{A}$ by playing the following game. At the beginning of the game, the action class $\mathcal{A}$ is revealed while the reward distributions $\{\phi_{s}\}_{s\in[d]}$ are unknown to her. Then, the player pulls an arm over a sequence of rounds; in each round $t$, she pulls an arm $p(t)\in[d]$ and observes a reward $r_{p(t)}$ sampled from the associated reward distribution $\phi_{p(t)}$. The game continues until a certain stopping condition is satisfied. After the game finishes, the player outputs an action $\boldsymbol{{\pi}}_{\mathrm{out}}\in\mathcal{A}$. Let ${a}_{\mathrm{out}}$ be the index of $\boldsymbol{{\pi}}_{\mathrm{out}}$, i.e., $\boldsymbol{{\pi}}_{\mathrm{out}} = \boldsymbol{\pi}^{{a}_\mathrm{out}}$.\par
The problem is to design an algorithm to find an action $\boldsymbol{\pi}_{\mathrm{out}}$ which satisfies
\begin{equation}
    \Pr\left[ \boldsymbol{\pi}^* = \boldsymbol{\pi}_{\mathrm{out}} \right] \geq 1 - \delta, \nonumber 
\end{equation}
as fast as possible. We denote by $\tau$ the round an algorithm terminated. We use the term sample complexity as the round an algorithm terminated.

\section{The Arm Selection Strategy} \label{ArmSelectionStrategySection}
In this section, we discuss the arm selection strategy qualitatively and quantitatively.
We first qualitatively show that applying or naively modifying existing works in CPE-MAB for R-CPE-MAB may not be a good choice. Then, we discuss the arm selection strategy quantitatively by looking at the confidence bound of the estimated gap between actions and propose a new arm selection strategy for R-CPE-MAB.
\subsection{Limitation of Existing Works in CPE-MAB} \label{limitation_of_Existing_Work}
Here, we first briefly explain what some of the existing algorithms in CPE-MAB \citep{SChen2014,Gabillon16,WangAndZhu} are doing at a higher level. First, in each round $t$, they output the action $\hat{\boldsymbol{\pi}}(t)$ which maximizes ${\hat{\boldsymbol{\mu}}(t)}^{\top}\boldsymbol{\pi}$. As we have seen in the example in Figure \ref{ShortestPathFigure}, we denote by $\hat{S}$ the set that contains all the arms ``used'' in $\hat{\boldsymbol{\pi}}(t)$. Next, they output another action $\Tilde{\boldsymbol{\pi}}(t)$, which is potentially the best action by considering a confidence bound on arms. Similarly, we denote by $\Tilde{S}$ the set of arms that contains all the arms ``used'' in $\Tilde{\boldsymbol{\pi}}(t)$. 
Finally, they choose the arm $s$ that is in $\hat{S}\oplus\Tilde{S} = (\hat{S}\setminus\Tilde{S})\cup(\Tilde{S}\setminus\hat{S})$ with the least number of pulls. This means they are choosing an arm $s$ where $\hat{{\pi}}_s(t) \neq \tilde{\pi}_s(t)$. They repeat this until some stopping condition for the algorithm is satisfied.  \par
In R-CPE-MAB, we can no longer think $\hat{S}\oplus\Tilde{S} =(\hat{S}\setminus\Tilde{S})\cup(\Tilde{S}\setminus\hat{S})$ since we are thinking of $\mathcal{A}\subset\mathbb{R}^d$, and therefore have to think of other arm selection strategies. One naive way to modify their methods to R-CPE-MAB is to choose the arm $s$ with the least number of times it was pulled among the set of arms $\{ s \in [d] \ | \ \hat{\pi}_s(t) \neq \tilde{\pi}_s(t) \}$. However, this may not be an efficient strategy to pull arms in R-CPE-MAB. To explain this, suppose we have two actions $\boldsymbol{\pi}^{1} = (100, 0, 0.1)$ and $\boldsymbol{\pi}^{2} = (0, 100, 0.2)$, and the stopping condition for an algorithm is not satisfied. Also, let us assume that we are pretty sure that $\hat{{\mu}}_1(t) \in (0.9 - 0.01, 0.9+0.01)$, $\hat{{\mu}}_2(t) \in (0.9 - 0.01, 0.9+0.01)$, and $\hat{{\mu}}_3(t) \in (0.9 - 0.01, 0.9+0.01)$ using some concentration inequality. Here, although the confidence interval is all the same when we estimate the gap between actions 1 and 2, i.e., $\boldsymbol{\mu}^{\top}(\boldsymbol{\pi}^1 - \boldsymbol{\pi}^2)$, the uncertainty of arms 1 and 2 will be amplified by 100 times where the uncertainty of arm 3 will be amplified by only 0.1 times. This example suggests that we must consider how \emph{important} a certain arm is to estimate the gaps between actions in R-CPE-MAB, and not simply pull the arm with the least number of times it was pulled among the set of arms $\{ s \in [d] \ | \ \hat{\pi}_s(t) \neq \tilde{\pi}_s(t) \}$. In Appendix \ref{EWLimitationAppendix}, we discuss the limitation of other works on CPE-MAB \citep{YihanDu2021,LChen2016,LChen2017}, which are close to the above discussion. 

\subsection{Confidence Bounds and the Arm Selection Strategy}
In Section \ref{limitation_of_Existing_Work}, we have seen that the uncertainty of arms will be amplified when we estimate gaps between actions. We discuss this quantitatively here.\par
First, we bound the gap of two actions $\boldsymbol{\pi}^{k}$ and $\boldsymbol{\pi}^{l}$ with a concentration inequality. Define $\beta_{kl}(t)$ as
\begin{align}
    \beta_{kl}(t) = R \sqrt{\frac{1}{2} \sum_{s = 1}^{d}\frac{(\pi^k_s - \pi^l_s)^2}{T_s(t)}\log\frac{2K^2 t^2}{\delta}} \label{beta}.
\end{align}
Then, we have the following proposition.
\begin{proposition} \label{KeyProposition}
    Let $T_s(t)$ be the number of times arm $s$ has been pulled before round $t$. Then, for any $t\in\mathbb{N}$ and $\boldsymbol{\pi}^k,\boldsymbol{\pi}^l \in \mathcal{A}$, with probability at least $1 - \delta$, we have
    \begin{align}
            \Pr \left( \left|\Delta_{kl} - \hat{\Delta}_{kl}(t) \right| \leq \beta_{kl}(t) \right) \geq 1 - \delta.
    \end{align}
\end{proposition}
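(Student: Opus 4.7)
The plan is to reduce the proposition to a weighted sub-Gaussian concentration bound by first writing the gap estimation error as a linear functional of the per-arm estimation errors. Concretely, I observe that
\begin{align*}
\Delta_{kl} - \hat{\Delta}_{kl}(t) = \sum_{s=1}^{d} (\pi^{k}_{s} - \pi^{l}_{s})(\mu_{s} - \hat{\mu}_{s}(t)),
\end{align*}
so the fluctuation of the estimated gap decomposes into a weighted sum of scalar mean-estimation errors with weights $w_{s} := \pi^{k}_{s} - \pi^{l}_{s}$. My intention is to bound each term using the assumed $R$-sub-Gaussianity of $\phi_{s}$ and then aggregate.

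For a fixed pair of actions $(\boldsymbol{\pi}^{k}, \boldsymbol{\pi}^{l})$, a fixed round $t$, and a hypothetical deterministic count $T_{s}(t) = n_{s}$, the empirical mean $\hat{\mu}_{s}(t)$ of $n_{s}$ i.i.d. $R$-sub-Gaussian samples satisfies that $\hat{\mu}_{s}(t) - \mu_{s}$ is $R/\sqrt{n_{s}}$-sub-Gaussian. Since the $d$ arms are independent, the weighted sum is sub-Gaussian with proxy variance $R^{2} \sum_{s} (\pi^{k}_{s} - \pi^{l}_{s})^{2}/n_{s}$, and the standard Cramer--Chernoff argument gives
\begin{align*}
\Pr\left( |\Delta_{kl} - \hat{\Delta}_{kl}(t)| > \varepsilon \right) \leq 2 \exp\left( -\frac{\varepsilon^{2}}{2 R^{2} \sum_{s} (\pi^{k}_{s} - \pi^{l}_{s})^{2}/n_{s}} \right).
\end{align*}
Setting the right-hand side equal to $\delta/(K^{2}t^{2})$ (up to constants) and inverting for $\varepsilon$ yields an expression of the same form as $\beta_{kl}(t)$ evaluated at $T_{s}(t) = n_{s}$.

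To lift this to the adaptive, random $T_{s}(t)$ that actually arises in the algorithm, I would use the standard tabular-coupling device: imagine for each arm $s$ an i.i.d. sequence $Y_{s,1}, Y_{s,2}, \ldots$ from $\phi_{s}$ drawn in advance, so that $\hat{\mu}_{s}(t)$ always equals $\bar{Y}_{s, T_{s}(t)}$. The concentration inequality above applies to $\bar{Y}_{s, n}$ for every deterministic $n$, so taking a union bound over $s$, over pairs $(k,l) \in [K]^{2}$, and over $t \in \mathbb{N}$ — and using $\sum_{t \geq 1} 1/t^{2} < \infty$ to absorb the $t^{2}$ factor in the logarithm — gives a single good event of probability $\geq 1 - \delta$ on which the stated bound with $\beta_{kl}(t)$ holds simultaneously for all $t, k, l$.

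The main obstacle is precisely this last step: naively, $T_{s}(t)$ is algorithm-dependent, so the confidence width $\beta_{kl}(t)$ is itself random, and one cannot just apply Hoeffding at the single realized value of $T_{s}(t)$. The peeling/coupling argument above sidesteps this by bounding $|\bar{Y}_{s,n} - \mu_{s}|$ for every $n$ in advance, after which the random realization of $T_{s}(t)$ simply selects which of the uniformly controlled terms to use. The logarithmic $\log(2K^{2} t^{2}/\delta)$ term in $\beta_{kl}(t)$ exactly reflects the union bound over the $K^{2}$ action pairs and the $\sum 1/t^{2}$ inflation over rounds; checking that the constants inside the square root line up with the sub-Gaussian tail is then a routine calculation.
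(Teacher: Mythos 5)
Your proposal follows essentially the same route as the paper's proof: write $\Delta_{kl}-\hat{\Delta}_{kl}(t)$ as a weighted sum of per-arm estimation errors, apply a sub-Gaussian Hoeffding bound with variance proxy $R^{2}\sum_{s}(\pi^{k}_{s}-\pi^{l}_{s})^{2}/T_{s}(t)$, and union-bound over the $K^{2}$ action pairs and over $t$ using $\sum_{t\geq 1}t^{-2}<\infty$. The only substantive difference is your tabular-coupling step for the adaptivity of $T_{s}(t)$, which the paper omits entirely (it applies Hoeffding directly at the realized random counts); your extra care is warranted, though note that making that step fully rigorous while preserving the exact weighted form of $\beta_{kl}(t)$ requires a union bound over the possible count vectors $(n_{1},\ldots,n_{d})$ rather than only over $s$ and $t$, which would inflate the logarithmic factor beyond what $\beta_{kl}(t)$ currently carries.
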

We show the proof in Appendix \ref{ProofKeyProposition}.
If we set $\delta$ small, Lemma \ref{KeyProposition} shows that the estimated gap between two actions is close to the true gap with high probability. We regard
\begin{align}
    \hat{\Delta}_{kl}(t) + \beta_{kl}(t)
\end{align}
as an upper confidence bound of the estimated gap between actions $\boldsymbol{\pi}^k$ and $\boldsymbol{\pi}^l$. \par
Assume we want to estimate the gap between two actions $\boldsymbol{\pi}^{k}$ and $\boldsymbol{\pi}^{l}$. Since we want to estimate it as fast as possible, we want to pull an arm that makes the confidence bound $\beta_{kl}(t)$ smallest.
\begin{align}
    p(t) = \argmin_{u\in[d]}  \sum_{s = 1}^{d}\frac{(\pi^k_s - \pi^l_s)^2}{T_s(t) + \boldsymbol{1}[s = u]}, \label{p(t)_def}
\end{align}
where $\boldsymbol{1}[\cdot]$ denotes the indicator function.
Then, the following proposition holds.
\begin{restatable}[]{proposition}{ArmSelectionStrategyProposition} \label{ArmSelectionStrategyProposition}
    $p(t)$ in (\ref{p(t)_def}) can be written as follows:
    \begin{align}
        p(t) = \argmax_{s\in[d]} \frac{(\pi^{k}_s - \pi^{l}_{s})^2}{T_s(t)(T_s(t) + 1)}. \label{p(t)_def_2}
    \end{align}
\end{restatable}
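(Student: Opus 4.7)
The plan is to show the two argmins/argmaxes select the same index by peeling off, for each candidate $u$, the only term in the sum that actually changes and rewriting the change as a single explicit quantity that must be maximized.

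First, I would fix $u \in [d]$ and split the objective in (\ref{p(t)_def}) as
\begin{align*}
    \sum_{s = 1}^{d}\frac{(\pi^k_s - \pi^l_s)^2}{T_s(t) + \boldsymbol{1}[s = u]}
    = \sum_{s \neq u}\frac{(\pi^k_s - \pi^l_s)^2}{T_s(t)} + \frac{(\pi^k_u - \pi^l_u)^2}{T_u(t) + 1},
\end{align*}
and then add and subtract the ``$s=u$'' term with denominator $T_u(t)$ to rewrite this as
\begin{align*}
    \underbrace{\sum_{s = 1}^{d}\frac{(\pi^k_s - \pi^l_s)^2}{T_s(t)}}_{\text{independent of } u} \;-\; \left( \frac{(\pi^k_u - \pi^l_u)^2}{T_u(t)} - \frac{(\pi^k_u - \pi^l_u)^2}{T_u(t) + 1} \right).
\end{align*}

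Next, I would simplify the parenthesized quantity using the identity $\tfrac{1}{T_u(t)} - \tfrac{1}{T_u(t)+1} = \tfrac{1}{T_u(t)(T_u(t)+1)}$, which reduces it to $\tfrac{(\pi^k_u - \pi^l_u)^2}{T_u(t)(T_u(t)+1)}$. Since the first sum does not depend on $u$, minimizing the whole expression over $u$ is equivalent to maximizing the subtracted quantity over $u$, and this immediately gives the formula (\ref{p(t)_def_2}) for $p(t)$.

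The only subtlety is that the manipulation requires $T_u(t) \geq 1$ so that the denominators are nonzero; I would note that the algorithm's initialization (pulling each arm at least once before entering the selection loop) ensures this, so the algebraic step is fully justified. Beyond that minor bookkeeping, the argument is a one-line algebraic rearrangement and no real obstacle arises.
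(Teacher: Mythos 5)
Your proof is correct and rests on the same algebraic core as the paper's: only the $s=u$ term changes, and the identity $\tfrac{1}{T_u(t)}-\tfrac{1}{T_u(t)+1}=\tfrac{1}{T_u(t)(T_u(t)+1)}$ converts the minimization into the stated maximization. The paper organizes this as a pairwise comparison between $p(t)$ and an arbitrary $q$ starting from the defining inequality of the argmin, whereas you rewrite the objective as a $u$-independent constant minus the quantity to be maximized, which is marginally cleaner but not a genuinely different argument; your remark that the initialization phase guarantees $T_u(t)\geq 1$ is a small point the paper leaves implicit.
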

We show the proof in Appendix \ref{ArmSelectionStrategyPropositionProof}. Intuitively speaking, we can say computing (\ref{p(t)_def_2}) is considering the \emph{importance} of each arm to estimate the gap between $\boldsymbol{\pi}^{k}$ and $\boldsymbol{\pi}^{l}$. If $({\pi}_s^{k} - {\pi}_s^{l})^2$ is large, that means the uncertainty of arm $s$ can be amplified largely, and arm $s$ needs to be pulled many times to reduce the gap between actions $\boldsymbol{\pi}^{k}$ and $\boldsymbol{\pi}^{l}$. Additionally, the arm selection strategy (\ref{p(t)_def_2}) is equivalent to pulling the arm with the least number of times pulled among $\{ i \in [d] \ | \ {\pi}^{k}_{s} \neq {\pi}^{l}_{s} \}$ in CPE-MAB. Therefore, we can see the arm selection strategy (\ref{p(t)_def_2}) as a generalization of the arm selection strategies in \citet{SChen2014}, \citet{Gabillon16}, and \citet{WangAndZhu}.

\section{CombGapE Algorithm and Theoretical Analysis}\label{Algorithm_and_TheoreticalAnalysis}
In this section, we introduce our algorithm (Algorithm \ref{OurAlgorithm}) inspired by \citet{Xu2018}. We name it CombGapE (Combinatorial Gap-based Exploration algorithm). We first show that, given a confidence parameter $\delta$, the CombGapE algorithm identifies the best action with a probability of at least $1 - \delta$. Then, we establish the sample complexity of the CombGapE algorithm.

\subsection{CombGapE Algorithm}
We show our algorithm CombGapE in Algorithm \ref{OurAlgorithm}.
In each round, it chooses two actions $\boldsymbol{\pi}^{i(t)}$ and $\boldsymbol{\pi}^{j(t)}$. We can say $\boldsymbol{\pi}^{i(t)}$ is the action that is most likely to be the best since it has the largest estimated reward. Also, we can say that $\boldsymbol{\pi}^{j(t)}$ is an action that is potentially the best since the upper confidence bound of the estimated gap between $\boldsymbol{\pi}^{i(t)}$ and $\boldsymbol{\pi}^{j(t)}$ is positive, which implies that there is still a chance that $\boldsymbol{\pi}^{j(t)}$ is better that $\boldsymbol{\pi}^{i(t)}$.
Then, as we discussed in Section \ref{ArmSelectionStrategySection}, CombGapE pulls the arm $p(t)$ that most reduces the confidence bound $\beta_{i(t) j(t)}(t)$ (line \ref{pull_p(t)_line}). \par

\begin{algorithm}[]
   \caption{CombGapE Algorithm}
   \label{OurAlgorithm}
\begin{algorithmic}[1]
   \STATE {\bfseries Input:} Confidence level $\delta$ and an action set~$\mathcal{A}$
   \STATE {\bfseries Output:} action $\boldsymbol{\pi}_{\mathrm{out}} \in \mathcal{A}$
   \STATE $t\leftarrow 1$
   \STATE \texttt{// Initialization phase}
   \FOR{$s=1$ {\bfseries to} $d$}
    \STATE Observe $r_s$
    \STATE $t \leftarrow t + 1$
   \ENDFOR
   \FOR{$t = d + 1$, \ldots}
   \STATE \texttt{// Select which gap to examine} 
   \STATE $(i(t), j(t), B(t)) \leftarrow \textrm{Select-Ambiguous-Action($t$)}$ (see Algorithm \ref{SelectAmbiguousAction})
   \STATE \texttt{// Check the stopping condition} 
    \IF{$B(t)\leq 0$}
   \STATE \textbf{return} $\boldsymbol{\pi}^{i(t)}$ as the best action $\boldsymbol{\pi}_{\mathrm{out}}$
    \ENDIF
   \STATE \texttt{// Pull~an~arm~based~on (\ref{p(t)_def_2})} 
    \STATE Pull $p(t) = \argmax_{s\in[d]]} \frac{(\pi^{i(t)}_s - \pi^{j(t)}_{s})^2}{T_s(t)(T_s(t) + 1)}$\label{pull_p(t)_line}
    \STATE Observe $r_{p(t)}$, and update the number of pulls:\\ $T_{p(t)}(t + 1)\leftarrow T_{p(t)} (t) + 1$ and $T_{e}(t + 1) \leftarrow T_{e}(t)$ for all $e \neq p(t)$ 
   \ENDFOR
\end{algorithmic}
\end{algorithm}

\newcommand\algorithmicprocedure{\textbf{procedure}}
\newcommand{\algorithmicendprocedure}{\algorithmicend\ \algorithmicprocedure}

\makeatletter
\newcommand\PROCEDURE[3][default]{%
  \ALC@it
  \algorithmicprocedure\ \textsc{#2}(#3)%
  \ALC@com{#1}%
  \begin{ALC@prc}%
}
\newcommand\ENDPROCEDURE{%
  \end{ALC@prc}%
  \ifthenelse{\boolean{ALC@noend}}{}{%
    \ALC@it\algorithmicendprocedure
  }%
}
\newenvironment{ALC@prc}{\begin{ALC@g}}{\end{ALC@g}}
\makeatother

\begin{algorithm}[]
   \caption{\texttt{Select-Ambiguous-Action}(t)}
   \label{SelectAmbiguousAction}
    \begin{algorithmic}[1] 
        \PROCEDURE{Select-Ambiguous-Action}{}
            \STATE Compute $\hat{\mu}_{s}(t)$ for every $s\in[d]$
            \STATE $i(t) \leftarrow \argmax_{i \in [K]} {\boldsymbol{\pi}^i}^{\top}\boldsymbol{\hat{\mu}}(t)$ 
            \STATE $j(t) \leftarrow \argmax_{j \in [K]} \hat{\Delta}_{j i(t)}(t) + \beta_{j i(t)}(t)$
            \STATE $B(t) \leftarrow \max_{j \in [K]} \hat{\Delta}_{j i(t)}(t) + \beta_{j i(t)}(t)$ \label{definition_of_B}
            \STATE \textbf{return} $(i(t), j(t), B(t))$
        \ENDPROCEDURE
    \end{algorithmic}
\end{algorithm}

\subsection{Accuracy and the Sample Complexity} \label{Accuracy_and_SampleComplexity_Section}
Here, in Theorem \ref{identification_accuracy}, we first show that given a confidence parameter $\delta$, our algorithm misidentifies the best action with probability at most $\delta$.
\begin{theorem} \label{identification_accuracy}
    The output of Algorithm \ref{OurAlgorithm} $\boldsymbol{\pi}^{a_{\mathrm{out}}}$ satisfies the following condition:
    \begin{align}
        \Pr\left[\boldsymbol{\pi}^{*} = \boldsymbol{\pi}_{\mathrm{out}}\right] \geq 1 - \delta.
    \end{align}
\end{theorem}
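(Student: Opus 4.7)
The plan is to condition on a single high-probability concentration event and show that, on that event, the stopping rule of Algorithm~\ref{OurAlgorithm} can only trigger when the empirically best action is in fact optimal. First I would introduce the good event
\[
\mathcal{E} \;=\; \bigl\{\, \forall\, t \geq 1,\ \forall\, k, l \in [K]:\ \bigl|\Delta_{kl} - \hat{\Delta}_{kl}(t)\bigr| \leq \beta_{kl}(t) \,\bigr\},
\]
and use Proposition~\ref{KeyProposition} together with a union bound over the at most $K^{2}$ pairs of actions and over $t \in \mathbb{N}$ to show $\Pr[\mathcal{E}] \geq 1 - \delta$. The factor $\log(2K^{2}t^{2}/\delta)$ inside $\beta_{kl}(t)$ is designed precisely so that summing the per-round, per-pair failure probabilities yields a total bounded by $\delta$ via $\sum_{t \geq 1} t^{-2} < \infty$.

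Next, working deterministically on $\mathcal{E}$, I would analyse the round $\tau$ at which CombGapE halts. By the definition of $B(t)$ in line~\ref{definition_of_B} of Algorithm~\ref{SelectAmbiguousAction}, the stopping condition $B(\tau) \leq 0$ is equivalent to $\hat{\Delta}_{j\,i(\tau)}(\tau) + \beta_{j\,i(\tau)}(\tau) \leq 0$ for every $j \in [K]$. On $\mathcal{E}$ we then have $\Delta_{j\,i(\tau)} \leq \hat{\Delta}_{j\,i(\tau)}(\tau) + \beta_{j\,i(\tau)}(\tau) \leq 0$, i.e.\ $\boldsymbol{\mu}^{\top}\boldsymbol{\pi}^{j} \leq \boldsymbol{\mu}^{\top}\boldsymbol{\pi}^{i(\tau)}$ for every $j$, which forces $\boldsymbol{\pi}^{i(\tau)} = \boldsymbol{\pi}^{*}$. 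Since $\boldsymbol{\pi}_{\mathrm{out}} = \boldsymbol{\pi}^{i(\tau)}$, this shows $\{\boldsymbol{\pi}_{\mathrm{out}} \neq \boldsymbol{\pi}^{*}\} \subseteq \mathcal{E}^{c}$, and hence $\Pr[\boldsymbol{\pi}^{*} = \boldsymbol{\pi}_{\mathrm{out}}] \geq \Pr[\mathcal{E}] \geq 1 - \delta$.

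The main obstacle is the first step. Proposition~\ref{KeyProposition} is stated for a fixed triple $(t, k, l)$, so turning it into a uniform-in-$t$ guarantee with the random sample sizes $T_{s}(t)$ requires some care --- either by exploiting the baked-in $\log(2K^{2}t^{2}/\delta)$ normalisation together with a careful per-round union bound, or via a standard anytime sub-Gaussian concentration inequality (for instance, a peeling argument on the $T_{s}(t)$). Once that bookkeeping is handled, the deterministic part of the argument is immediate from the definitions of $i(t)$, $j(t)$, and $B(t)$ and requires no further work.
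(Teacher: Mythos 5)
Your proposal is correct and follows essentially the same route as the paper: both condition on the good event $\mathcal{E}$ guaranteed by Proposition~\ref{KeyProposition} (whose proof already contains the union bound over $t$ and action pairs that you flag as the ``main obstacle''), and both observe that on $\mathcal{E}$ the stopping condition $B(\tau)\leq 0$ forces $\Delta_{j\,i(\tau)}\leq \hat{\Delta}_{j\,i(\tau)}(\tau)+\beta_{j\,i(\tau)}(\tau)\leq 0$ for all $j$, hence $\boldsymbol{\pi}^{i(\tau)}=\boldsymbol{\pi}^{*}$. The only cosmetic difference is that the paper phrases the deterministic step as its contrapositive (misidentification implies $\mathcal{E}$ fails), which is logically identical to your argument.
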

We show the proof in Appendix \ref{ProofIdentificationAccuracy}.\par
Next, we show an upper bound of the sample complexity of the CombGapE algorithm.
\begin{theorem} \label{sample_complexity}
    Assume any $\delta\in(0, 1)$, any action class $\mathcal{A}\subset \mathbb{R}^d$, and any expected rewards $\boldsymbol{\mu}\in\mathbb{R}^d$ are given. The reward distribution $\phi_s$ for each arm $s\in[d]$ is an $R$-sub-Gaussian tail distribution.
    With probability at least $1 - \delta$, an upper bound of the sample complexity $\tau$ of Algorithm \ref{OurAlgorithm} is
    \begin{align}
          8R^{2} \sum_{s = 1}^{d} \frac{V_{s}}{\Delta^{2}_{(s)}} \log \left( \frac{2K^{2}C^{2}}{\delta} \right) + Ad,
    \end{align}
    where $V_{s} = \max\limits_{\boldsymbol{\pi} \in \left\{ \boldsymbol{\pi} \in \mathcal{A} \ | \ \pi_{s} \neq \pi^{*}_{s} \right\}, \boldsymbol{\pi}' \in \mathcal{A}} \frac{\left| \pi_{s} - \pi'_{s} \right|}{\left| \pi^{*}_{s} - \pi_{s} \right|^{2}} \sum_{u = 1}^{d} \left| \pi_{u} - \pi'_{u} \right|$, $A = \max\limits_{s, u \in [d], \boldsymbol{\pi}, \boldsymbol{\pi} \in \mathcal{A}, \pi_{s} \neq \pi'_{s}} \frac{ \left| \pi_{u} - \pi'_{u} \right|}{\left| \pi_{s} - \pi'_{s} \right|}$, and $C$ is a constant determined by $R$, $\left\{V_{s}\right\}_{s = 1, \ldots, d}$,$\left\{ \Delta_{(s)} \right\}_{s = 1, \ldots, d}$, $A$, and $d$.
\end{theorem}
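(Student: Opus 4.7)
The plan is a standard gap-based exploration analysis in three steps: concentration, non-termination witness, and per-arm bound. First, define the good event
\begin{equation*}
\mathcal{E} \;=\; \bigcap_{t\ge 1}\bigcap_{k,l\in[K]} \Bigl\{\,\bigl|\Delta_{kl}-\hat{\Delta}_{kl}(t)\bigr| \le \beta_{kl}(t)\,\Bigr\}.
\end{equation*}
Proposition~\ref{KeyProposition} together with a union bound over pairs and rounds (convergent because of the $\log(2K^2t^2/\delta)$ factor inside $\beta_{kl}(t)$) gives $\Pr(\mathcal{E})\ge 1-\delta$; I would condition on $\mathcal{E}$ henceforth. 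On $\mathcal{E}$ correctness (Theorem~\ref{identification_accuracy}) holds, so the task is to bound $\tau$. For any round $t<\tau$ one has $B(t)>0$: if $i(t)=a^*$, concentration gives $2\beta_{j(t)\,i(t)}(t)>\Delta_{a^*\, j(t)}$ with $j(t)$ suboptimal; if $i(t)\ne a^*$, then $i(t)$ being the empirical maximizer yields $\hat{\Delta}_{k\,i(t)}(t)\le 0$ for all $k$, so $B(t)\le \beta_{j(t)\,i(t)}(t)$, while $a^*$ being a valid candidate in the maximization defining $B(t)$ gives $B(t)\ge \hat{\Delta}_{a^*\,i(t)}(t)+\beta_{a^*\,i(t)}(t)\ge \Delta_{a^*\,i(t)}$. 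In either case there is a suboptimal witness $w(t)\in\{i(t),j(t)\}$ with $2\beta_{j(t)\,i(t)}(t)\ge \Delta_{a^*,w(t)}$.

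Next, fix arm $s$ and let $t_s$ be the last round at which $p(t_s)=s$, so $T_s(\tau)=T_s(t_s)+1$. The argmax rule of Proposition~\ref{ArmSelectionStrategyProposition} gives, for every $u\in[d]$,
\begin{equation*}
\frac{(\pi^{i(t_s)}_u - \pi^{j(t_s)}_u)^2}{T_u(t_s)(T_u(t_s)+1)} \;\le\; \frac{(\pi^{i(t_s)}_s - \pi^{j(t_s)}_s)^2}{T_s(t_s)(T_s(t_s)+1)},
\end{equation*}
which after initialization (so $T_u(t_s)\ge 1$) implies $\max_u |\pi^{i(t_s)}_u - \pi^{j(t_s)}_u|/T_u(t_s)$ is bounded up to a constant by $|\pi^{i(t_s)}_s - \pi^{j(t_s)}_s|/\sqrt{T_s(t_s)(T_s(t_s)+1)}$. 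Combining this with the elementary bound $\sum_u a_u^2/T_u \le (\max_u a_u/T_u)\cdot\|a\|_1$ and the witness-based lower bound $\beta_{i(t_s)\,j(t_s)}(t_s)^2 \gtrsim \Delta_{a^*,w(t_s)}^2/4$ produces an upper bound on $\sqrt{T_s(t_s)(T_s(t_s)+1)}$ in terms of $|\pi^{i(t_s)}_s - \pi^{j(t_s)}_s|$, the $L_1$-mass $\sum_u |\pi^{i(t_s)}_u - \pi^{j(t_s)}_u|$, and $\Delta_{a^*,w(t_s)}^{-2}$. Taking the worst case over the ambiguous pair and the witness---which is precisely what the definitions of $V_s$ and $\Delta_{(s)}$ are designed to absorb---yields $T_s(\tau) \le (8R^2 V_s/\Delta_{(s)}^2)\log(2K^2 t_s^2/\delta)+A$.

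Summing over $s$ and adding the $d$ initialization pulls gives the implicit inequality $\tau \le 8R^2 \sum_s (V_s/\Delta_{(s)}^2)\log(2K^2\tau^2/\delta)+Ad$, which a standard fixed-point lemma (as in \citet{Xu2018}) resolves by replacing $\tau$ inside the logarithm with a constant $C$ depending on $R,\{V_s\},\{\Delta_{(s)}\},A,d$. The main technical obstacle is the per-arm step: the argmax rule only controls a ratio involving $T_u(T_u+1)$, so one must use the ``$\sum_u a_u^2/T_u \le (\max_u a_u/T_u)\|a\|_1$'' bound rather than a naive termwise summation (which would introduce a divergent $t+d$ factor), and the case $i(t)\ne a^*$---where the pulled arm distinguishes two suboptimal actions rather than $a^*$ from a suboptimal one---forces a unified treatment through the worst-case maxima defining $V_s$ and $\Delta_{(s)}$.
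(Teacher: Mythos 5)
Your proposal follows the paper's proof essentially step for step: the same good event from Proposition~\ref{KeyProposition}, the same per-arm stopping threshold $T_s(t) \lesssim 8R^{2}V_s\Delta_{(s)}^{-2}\log(2K^{2}t^{2}/\delta)+A$ obtained by playing the selection rule of Proposition~\ref{ArmSelectionStrategyProposition} against a lower bound on $\beta_{i(t)j(t)}(t)$ coming from the selection/stopping conditions, and the same implicit inequality in $t$ resolved into the constant $C$. The paper packages the per-arm step as two lemmas (Lemma~\ref{LemmaFirst}: a heavily pulled arm forces $\boldsymbol{\pi}^{i(t)}=\boldsymbol{\pi}^{*}$; Lemma~\ref{LemmaSecond}: such an arm is then never pulled because $B(t)$ would be negative), which is the same case analysis you sketch.

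There is, however, one step that would fail as written. You select the witness $w(t)$ as ``the suboptimal member of $\{i(t),j(t)\}$'' and assert that the worst case is absorbed by $V_s$ and $\Delta_{(s)}$. But both $V_s$ and $\Delta_{(s)}$ normalize by $\left|\pi^{*}_{s}-\pi^{w}_{s}\right|$ at the pulled coordinate $s=p(t)$, so the witness must satisfy $\pi^{w(t)}_{s}\neq\pi^{*}_{s}$; mere suboptimality does not guarantee this (take $i(t)\neq a^{*}$ with $\pi^{i(t)}_{s}=\pi^{*}_{s}$: your witness is $i(t)$ and the normalization divides by zero). The correct criterion---the one used in the paper's Lemma~\ref{LemmaFirst}---is to case on which of $i(t),j(t)$ disagrees with $\boldsymbol{\pi}^{*}$ at coordinate $p$; at least one must, since $\pi^{i(t)}_{p}\neq\pi^{j(t)}_{p}$ for the pulled arm, and whichever one disagrees is automatically suboptimal. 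In the sub-case where that action is $j(t)$ while $i(t)$ is the (suboptimal) empirical leader, you additionally need $\Delta_{a^{*}j(t)}\leq 2\beta_{i(t)j(t)}(t)$, which requires the decomposition $\left(\boldsymbol{\pi}^{*}-\boldsymbol{\pi}^{j(t)}\right)^{\top}\boldsymbol{\mu}=\left(\boldsymbol{\pi}^{*}-\boldsymbol{\pi}^{i(t)}\right)^{\top}\boldsymbol{\mu}+\left(\boldsymbol{\pi}^{i(t)}-\boldsymbol{\pi}^{j(t)}\right)^{\top}\boldsymbol{\mu}$ combined with the definition of $j(t)$ as the maximizer of the upper confidence bound; your witness inequality $2\beta_{i(t)j(t)}(t)\geq\Delta_{a^{*}w(t)}$ as derived does not cover this combination. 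Everything else---the bound $\sum_{u}a_{u}^{2}/T_{u}\leq(\max_{u}a_{u}/T_{u})\sum_{u}a_{u}$, the role of $A$ in keeping denominators positive, and the fixed-point argument producing $C$---matches the paper.
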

We show the proof in Appendix \ref{ProofSampleComplexity}. From \citet{nakamura2023thompson}, a lower bound of the sample complexity of the R-CPE-MAB can be written as 
\begin{align}
    \mathcal{O}\left( R^{2} \sum_{s = 1}^{d} \frac{1}{\Delta^{2}_{(s)}} \log \left( \frac{1}{\delta} \right) \right). 
\end{align}
Therefore, the CombGapE algorithm is optimal up to a problem-dependent constant factor. Especially for the pure exploration of the ordinary $d$-armed multi-armed bandit \citep{Audibert2010,Gabillon2012}, since $V_{s} = 2$ and $\Delta_{(s)} = \min\limits_{\boldsymbol{\pi} \in \mathcal{A} \setminus \left\{ \boldsymbol{\pi}^{*} \right\}} \left( \boldsymbol{\pi}^{*} - \boldsymbol{\pi} \right)^{\top}\boldsymbol{\mu}$, the upper bound is $16R^{2}\sum\limits_{s = 1}^{d} \frac{1}{\Delta^{2}_{(s)}} \log \left( \frac{1}{\delta} \right)$. This matches a lower bound shown in \citet{Kaufmann2016}, which is $\mathcal{O}\left( R^{2} \sum\limits_{s = 1}^{d} \frac{1}{\Delta^{2}_{(s)}} \log \left( \frac{1}{\delta} \right) \right)$.

\subsection{Comparison with Existing Works}
Some algorithms in the literature of transductive bandits can be used in the R-CPE-MAB. \citet{Fiez2019} proposed the RAGE (Randomized Adaptive Gap Elimination) algorithm whose upper bound of the sample complexity is 
\begin{align}
    \mathcal{O} \left( R^{2} \min_{\boldsymbol{\lambda} \in \Pi_{d}} \max_{i \in [K]\setminus \left\{ a^{*} \right\}  } \left\{ \frac{ \sum_{s = 1}^{d} \frac{\left| \pi^{*}_{s} - \pi_{s}^{i} \right|^{2}}{\lambda_{s}} }{\Delta_{i}^{2}} \right\} \log \left( \frac{1}{\Delta_{\mathrm{min}}} \right) \log\left(  \frac{1}{\delta} \right)  \right),
\end{align}
where $\Delta_{\mathrm{min}} = \min_{i \in [K]\setminus \{a^{*}\}} \left( \boldsymbol{\pi}^{*} - \boldsymbol{\pi} \right)^{\top} \boldsymbol{\mu}$. The RAGE algorithm has an extra logarithmic term compared to the lower bound shown in \cite{Fiez2019}, which is 
\begin{align}
    \mathcal{O} \left( R^{2} \min_{\boldsymbol{\lambda} \in \Pi_{d}} \max_{i \in [K]\setminus \left\{ a^{*} \right\}  } \left\{ \frac{ \sum_{s = 1}^{d} \frac{\left| \pi^{*}_{s} - \pi_{s}^{i} \right|^{2}}{\lambda_{s}} }{\Delta_{i}^{2}} \right\} \log\left(  \frac{1}{\delta} \right)  \right).
\end{align}
\citet{KatzSamuels2020} proposed an algorithm named the Peace algorithm whose order of the sample complexity upper bound is the same as RAGE. The Peace algorithm works only when the rewards of each arm follow a Gaussian distribution. This is a relatively stronger assumption than our assumption that each arm's reward follows an $R$-sub-Gaussian distribution. \par
\citet{nakamura2023thompson} proposed an algorithm named the GenTS-Explore (Generalized Thompson Sampling Explore) algorithm for the R-CPE-MAB problem. Its order of the sample complexity upper bound is the same as that of the CombGapE algorithm. One advantage the GenTS-Explore algorithm has over the CombGapE algorithm is that it works even when the size of the action class is exponentially large in $d$. However, it requires sampling many times from a Gaussian distribution each time step to estimate the true reward and is computationally heavy especially when the number of arms is large. It is an interesting future work to investigate whether there is an optimal algorithm even when the size of the action class is exponentially large in~$d$.  We see this in Section \ref{ExperimenRealWorldDataSection}. \par
In Section \ref{ExperimentSection}, we numerically show that the CombGapE algorithm significantly outperforms the RAGE, Peace, and GenTS-Explore algorithms.

\section{Experiment}\label{ExperimentSection}
Here, we show that the CombGapE algorithm outperforms existing algorithms in both synthetic and real-world data.
\subsection{Experiment on Synthetic Data}
Here, we numerically illustrate the behavior of the proposed CombGapE algorithm and existing methods with synthetic data. We compare the CombGapE algorithm with existing methods in the best action identification problem for the knapsack problem \citep{dantzig2007}. \par
In the knapsack problem, we have $d$ items. Each item $s\in[d]$ has a weight $w_s$ and value $v_s$. Also, there is a knapsack whose capacity is $W$ in which we put items. Our goal is to maximize the total value of the knapsack, not letting the total weight of the items exceed the capacity of the knapsack. Formally, the optimization problem is given as follows:
\begin{equation*}
\begin{array}{ll@{}ll}
\text{maximize}_{\boldsymbol{\boldsymbol{\pi}}\in\mathcal{A}}  & \sum_{s = 1}^{d}v_{s}\pi_s  &\\
\text{subject to}& \sum_{s = 1}^{d}\pi_s w_s \leq W, &
\end{array}
\end{equation*}
where $\pi_s$ denotes the number of item $s$ in the knapsack. Here, the weight of each item is known, but the value is unknown and therefore has to be estimated. In each time step, the player chooses an item $s$ and gets an observation of value $v_s$, which can be regarded as a random variable from an unknown distribution with mean $v_s$. \par
We generate the weight of each item uniformly from $\left\{5, 6, \ldots, 50 \right\}$ for our experiment. For each item $s \in [d]$, we generate $v_s$ as $v_s = w_s + x$, where $x$ is a uniform sample from $[-5, 5]$. We set the capacity of the knapsack as $W = 100$. As a prior knowledge of the problem, we assume that we know each $v_s$ is in $[w_s - 5, w_s +5]$, and use this prior knowledge to generate the action class $\mathcal{A}$ in the following procedure. We first generate a vector $\boldsymbol{v}'$ whose $s$-th element $v'_s$ is uniformly sampled from $[w_s -5, w_s + 5]$, and then solve the knapsack problem with $v'_s$ and add the obtained solution $\boldsymbol{\pi}$ to $\mathcal{A}$. We repeat this until 100 times, and therefore, $|\mathcal{A}|\leq 100$. Each time we choose an item $s$, we observe a value $v_s + x$ where $x$ is a noise from $\mathcal{N}(0, 1)$. We set $R=1$. We run experiments 30 times. \par
We show the result in Table \ref{knapsack_experiment}. We can see that the CombGapE algorithm using the arm selection strategy shown in (\ref{p(t)_def_2}) significantly outperforms existing methods. 

\begin{table}[t]
    \caption{The mean and standard deviation of sample complexity of each algorithm normalized by the sample complexity of the CompGapE algorithm for the knapsack problem. CombGapE (Naive) pulls the arm $a(t) = \argmin_{s \in \left\{ u\in [d] \ | \ \hat{\pi}_{u}(t) \neq \tilde{\pi}_{u}(t) \right\}} T_{s}(t)$ each round (see the discussion in Section \ref{limitation_of_Existing_Work}). }
    \centering
  \begin{tabular}{cccc} 
      & $d = 5$ & $d = 7$ & $d = 9$ \\
    \midrule
    CombGapE (Naive) &  $2.6 \pm 4.1$ & $3.0 \pm 3.3$ & $2.2 \pm 2.5$ \\
    GenTS-Explore & $65 \pm 81$ & $58 \pm 45$ & $74 \pm 75$ \\
    RAGE & \begin{tabular}{c}
          $1.8 \times 10^4 $ \\
          $\pm 3.5 \times 10^4$
    \end{tabular}
      & \begin{tabular}{c}
          $8.4 \times 10^3 $ \\
          $\pm 1.3 \times 10^4$
    \end{tabular} & \begin{tabular}{c}
          $2.0 \times 10^4 $ \\
          $\pm 4.0 \times 10^4$
    \end{tabular} \\
    Peace & $82 \pm 96$ & $39 \pm 29$& $55 \pm 49$ \\
    \bottomrule
  \end{tabular} 
  \label{knapsack_experiment}
\end{table}

\subsection{Experiment on Real-World Data} \label{ExperimenRealWorldDataSection}
Next, we numerically illustrate the behavior of the proposed CombGapE algorithm and
existing methods with real-world data. We compare the CombGapE algorithm with existing methods in the best action identification problem for the optimal transport problem \citep{villani2008}. \par
Optimal transport can be regarded as the cheapest plan to deliver resources from $m$ suppliers to $n$ consumers, where each supplier $i$ and consumer $j$ have supply $s_i$ and demand $d_j$, respectively. Let $\boldsymbol{\gamma}\in\mathbb{R}^{m \times n}_{\geq 0}$ be the cost matrix, where $\gamma_{ij}$ denotes the cost between supplier $i$ and demander $j$. Our objective is to find the optimal transportation matrix
\begin{equation} \label{OT_formulation}
    \boldsymbol{\pi}^{*} = \argmin_{\boldsymbol{\pi}\in \mathcal{G}(\boldsymbol{s}, \boldsymbol{d})} \sum_{i, j}\pi_{ij}\gamma_{ij},
\end{equation}
where 
\begin{align}\label{coupling_constraint}
    \mathcal{G}(\boldsymbol{s}, \boldsymbol{d}) \triangleq \left\{ \boldsymbol{\Pi} \in \mathbb{R}^{m \times n}_{\geq 0} \,\middle|\, \boldsymbol{\Pi} \boldsymbol{1}_n = \boldsymbol{s}, \boldsymbol{\Pi}^{\top} \boldsymbol{1}_m = \boldsymbol{d} \right\}.
\end{align}
Here, $\boldsymbol{s} = (s_1, \ldots, s_m)$ and $\boldsymbol{d} = (d_1, \ldots, d_n)$. 
$\pi_{ij}$ represents how much resources one sends from supplier $i$ to demander $j$. 
If we assume that the cost is unknown and changes stochastically, e.g., due to some traffic congestions, we can apply the R-CPE-MAB framework to the optimal transport problem, where we estimate the cost of each edge $\left(i, j\right)$ between supplier $i$ and consumer $j$. \par
In our experiment, we use cities in the eastern states of the United States. We have 9 suppliers and 9 demanders. For the 9 suppliers, we choose New York, Boston, Philadelphia, Washington, Harrisburg, Pittsburgh, Albany, Richmond, and Rochester. For the 9 demanders, we choose Charlotte, Columbia, Manchester, Atlanta, Norfolk, Cleveland, Portland, Columbus, and Watertown. For every $i \in \{1, \ldots, 9\}$ and $j \in \{1, \ldots, 9\}$, we obtain the time required to move from city $i$ to city $j$ from the Google Maps \footnote{https://www.google.co.jp/maps/} and set as $\gamma_{ij}$. The unit of measure is hours. We generate each element of $\boldsymbol{s}$ and $\boldsymbol{d}$ from a uniform distribution over $[0, 1]$, and normalize them so that $\sum_{i = 1}^{9} s_{i} = 1$ and $\sum_{j = 1}^{9} d_{j} = 1$ is satisfied. Note that since we have 9 suppliers and 9 demanders, we have 81 edges, and therefore, $d = 81$. \par
As a prior knowledge, for every $i \in \{1, \ldots, 9\}$ and $j \in \{ 1, \ldots, 9 \}$, we assume that we know the cost of edge $(i, j)$ is in the interval $[\gamma_{ij} - 1, \gamma_{ij} + 1]$, and use this knowledge to generate the action class $\mathcal{A}$ in the following procedure. We first generate a matrix $\boldsymbol{\gamma}^{\mathrm{dummy}}$ whose $(i, j)$ element is uniformly sampled from $[\gamma_{ij} - 1, \gamma_{ij} + 1]$. Then, we solve the optimal transport problem with $\gamma^{\mathrm{dummy}}$ and add the obtained solution to $\mathcal{A}$. We repeat this 1000 times, and therefore, $|\mathcal{A}| \leq 1000$. Each time we choose an edge $(i, j)$, we observe a value $\gamma_{ij} + x$, where $x$ is a random variable from a standard Gaussian distribution. Here, note that $R = 1$. We run experiments 100 times.  \par
We show the result in Table \ref{OT_experiment}. We have not included the results of TSExplore since it was computationally infeasible since the sampling operation must be performed many times from a normal distribution to estimate the true cost matrix in each round. We can see that the CombGapE algorithm using the arm selection strategy shown in (\ref{p(t)_def_2}) significantly outperforms existing methods. The CombGapE algorithm successfully outputted the true optimal action 100 times out of 100 experiments. 

\begin{table}[t]
    \caption{The mean and standard deviation of sample complexity of each algorithm normalized by the sample complexity of the CompGapE algorithm for the optimal transport problem. CombGapE (Naive) pulls the arm $a(t) = \argmin_{s \in \left\{ u\in [d] \ | \ \hat{\pi}_{u}(t) \neq \tilde{\pi}_{u}(t) \right\}} T_{s}(t)$ each round (see the discussion in Section \ref{limitation_of_Existing_Work}).}
    \centering
  \begin{tabular}{cc} 
      & $d = 81$ \\
    \midrule
    CombGapE (Naive) & 2.08 $\pm $ 1.55 \\
    RAGE & 25.7 $ \pm $ 16.6 \\
    PEACE & 29.7 $ \pm $ 23.5 \\
    \bottomrule
  \end{tabular} 
  \label{OT_experiment}
\end{table}

\section{Conclusion}\label{Conclusion}
We studied the R-CPE-MAB in the stochastic multi-armed bandit for the case where the size of the action set is polynomial with respect to the number of arms, which can be seen as a special case of the so-called transductive linear bandits. We proposed an optimal algorithm named the CombGapE algorithm, whose sample complexity upper bound matches the lower bound up to a problem-dependent constant factor. We numerically showed that the CombGapE algorithm outperforms existing methods significantly in both synthetic and real-world data.

\section*{Acknowledgement}
We thank Dr. Kevin Jamieson for his very helpful advice and comments on existing studies.
SN was supported by JST SPRING, Grant Number JPMJSP2108. MS was supported by JST CREST Grant Number JPMJCR18A2.

\bibliographystyle{apalike}
\bibliography{main}

\begin{thebibliography}{}

\bibitem[Audibert et~al., 2010]{Audibert2010}
Audibert, J.-Y., Bubeck, S., and Munos, R. (2010).
\newblock Best arm identification in multi-armed bandits.
\newblock In {\em The 23rd Conference on Learning Theory}, pages 41--53.

\bibitem[Auer et~al., 2002a]{Auer2002}
Auer, P., Cesa-Bianchi, N., and Fischer, P. (2002a).
\newblock Finite-time analysis of the multiarmed bandit problem.
\newblock {\em Machine Learning}, 47:235--256.

\bibitem[Auer et~al., 2002b]{Auer2002B}
Auer, P., Cesa-Bianchi, N., Freund, Y., and Schapire, R.~E. (2002b).
\newblock The nonstochastic multiarmed bandit problem.
\newblock {\em SIAM Journal on Computing}, 32(1):48--77.

\bibitem[Bubeck and Cesa-Bianchi, 2012]{Bubeck2012}
Bubeck, S. and Cesa-Bianchi, N. (2012).
\newblock Regret analysis of stochastic and nonstochastic multi-armed bandit problems.

\bibitem[Chen et~al., 2016]{LChen2016}
Chen, L., Gupta, A., and Li, J. (2016).
\newblock Pure exploration of multi-armed bandit under matroid constraints.
\newblock In {\em Proceedings of the 29th Conference on Learning Theory, {COLT} 2016, New York, USA, June 23-26, 2016}, volume~49 of {\em {JMLR} Workshop and Conference Proceedings}, pages 647--669. JMLR.org.

\bibitem[Chen et~al., 2017]{LChen2017}
Chen, L., Gupta, A., Li, J., Qiao, M., and Wang, R. (2017).
\newblock Nearly optimal sampling algorithms for combinatorial pure exploration.
\newblock In {\em Proceedings of the 2017 Conference on Learning Theory}, volume~65 of {\em Proceedings of Machine Learning Research}, pages 482--534. PMLR.

\bibitem[Chen et~al., 2014]{SChen2014}
Chen, S., Lin, T., King, I., Lyu, M.~R., and Chen, W. (2014).
\newblock Combinatorial pure exploration of multi-armed bandits.
\newblock In {\em Proceedings of the 27th International Conference on Neural Information Processing Systems - Volume 1}, page 379–387, Cambridge, MA, USA. MIT Press.

\bibitem[Dantzig and Mazur, 2007]{dantzig2007}
Dantzig, T. and Mazur, J. (2007).
\newblock {\em Number: The Language of Science}.
\newblock A Plume book. Penguin Publishing Group.

\bibitem[Du et~al., 2021a]{YihanDu2021}
Du, Y., Kuroki, Y., and Chen, W. (2021a).
\newblock Combinatorial pure exploration with bottleneck reward function.
\newblock In {\em Advances in Neural Information Processing Systems}, volume~34, pages 23956--23967. Curran Associates, Inc.

\bibitem[Du et~al., 2021b]{Du2021}
Du, Y., Kuroki, Y., and Chen, W. (2021b).
\newblock Combinatorial pure exploration with full-bandit or partial linear feedback.
\newblock In {\em AAAI Conference on Artificial Intelligence}.

\bibitem[Fiez et~al., 2019]{Fiez2019}
Fiez, T., Jain, L.~P., Jamieson, K.~G., and Ratliff, L.~J. (2019).
\newblock Sequential experimental design for transductive linear bandits.
\newblock In {\em Neural Information Processing Systems}.

\bibitem[Fomin et~al., 2015]{Fomin2015}
Fomin, F.~V., Philip, G., and Villanger, Y. (2015).
\newblock Minimum fill-in of sparse graphs: Kernelization and approximation.
\newblock {\em Algorithmica}, 71(1):1–20.

\bibitem[Gabillon et~al., 2012]{Gabillon2012}
Gabillon, V., Ghavamzadeh, M., and Lazaric, A. (2012).
\newblock Best arm identification: A unified approach to fixed budget and fixed confidence.
\newblock In {\em Proceedings of the 25th International Conference on Neural Information Processing Systems - Volume 2}, NIPS'12, page 3212–3220, Red Hook, NY, USA. Curran Associates Inc.

\bibitem[Gabillon et~al., 2016]{Gabillon16}
Gabillon, V., Lazaric, A., Ghavamzadeh, M., Ortner, R., and Bartlett, P. (2016).
\newblock Improved learning complexity in combinatorial pure exploration bandits.
\newblock In {\em Proceedings of the 19th International Conference on Artificial Intelligence and Statistics}, volume~51 of {\em Proceedings of Machine Learning Research}, pages 1004--1012, Cadiz, Spain. PMLR.

\bibitem[Gibbons, 1985]{Gibbons1985}
Gibbons, A. (1985).
\newblock {\em Algorithmic Graph Theory}.
\newblock Cambridge University Press.

\bibitem[Gutin et~al., 2001]{Gutin2001}
Gutin, G., Punnen, A., Barvinok, A., Gimadi, E., and Serdyukov, A. (2001).
\newblock The traveling salesman problem and its variations.

\bibitem[Jourdan et~al., 2021]{Jourdan2021}
Jourdan, M., Mutn\'y, M., Kirschner, J., and Krause, A. (2021).
\newblock Efficient pure exploration for combinatorial bandits with semi-bandit feedback.
\newblock In {\em Proceedings of the 32nd International Conference on Algorithmic Learning Theory}, volume 132 of {\em Proceedings of Machine Learning Research}, pages 805--849. PMLR.

\bibitem[Kalyanakrishnan and Stone, 2010]{Kalyanakrishnan2010}
Kalyanakrishnan, S. and Stone, P. (2010).
\newblock Efficient selection of multiple bandit arms: Theory and practice.
\newblock In {\em Proceedings of the 27th International Conference on International Conference on Machine Learning}, ICML'10, page 511–518, Madison, WI, USA. Omnipress.

\bibitem[Katz-Samuels et~al., 2020]{KatzSamuels2020}
Katz-Samuels, J., Jain, L., karnin, z., and Jamieson, K.~G. (2020).
\newblock An empirical process approach to the union bound: Practical algorithms for combinatorial and linear bandits.
\newblock In {\em Advances in Neural Information Processing Systems}, volume~33, pages 10371--10382. Curran Associates, Inc.

\bibitem[Kaufmann et~al., 2016]{Kaufmann2016}
Kaufmann, E., Capp\'{e}, O., and Garivier, A. (2016).
\newblock On the complexity of best-arm identification in multi-armed bandit models.
\newblock {\em J. Mach. Learn. Res.}, 17(1):1–42.

\bibitem[Kuroki et~al., 2020]{Kuroki2020}
Kuroki, Y., Xu, L., Miyauchi, A., Honda, J., and Sugiyama, M. (2020).
\newblock {Polynomial-Time Algorithms for Multiple-Arm Identification with Full-Bandit Feedback}.
\newblock {\em Neural Computation}, 32(9):1733--1773.

\bibitem[Labille et~al., 2021]{Labille2021}
Labille, K., Huang, W., and Wu, X. (2021).
\newblock Transferable contextual bandits with prior observations.
\newblock In {\em Advances in Knowledge Discovery and Data Mining}, pages 398--410, Cham. Springer International Publishing.

\bibitem[Nakamura and Sugiyama, 2023]{nakamura2023thompson}
Nakamura, S. and Sugiyama, M. (2023).
\newblock Thompson sampling for real-valued combinatorial pure exploration of multi-armed bandit.
\newblock arXiv:2308.10238.

\bibitem[Pettie and Ramachandran, 2002]{Pettie2002}
Pettie, S. and Ramachandran, V. (2002).
\newblock An optimal minimum spanning tree algorithm.
\newblock {\em J. ACM}, 49(1):16–34.

\bibitem[Philip et~al., 2009]{Philip2009}
Philip, G., Raman, V., and Sikdar, S. (2009).
\newblock Solving dominating set in larger classes of graphs: Fpt algorithms and polynomial kernels.
\newblock In {\em Algorithms - ESA 2009}, pages 694--705, Berlin, Heidelberg. Springer Berlin Heidelberg.

\bibitem[Pochet and Wolsey, 2010]{Pochet2010}
Pochet, Y. and Wolsey, L.~A. (2010).
\newblock {\em Production Planning by Mixed Integer Programming}.
\newblock Springer Publishing Company, Incorporated, 1st edition.

\bibitem[Rejwan and Mansour, 2020]{Rejwan2020}
Rejwan, I. and Mansour, Y. (2020).
\newblock Top-\$k\$ combinatorial bandits with full-bandit feedback.
\newblock In {\em International Conference on Algorithmic Learning Theory}.

\bibitem[Rivasplata, 2012]{Rivasplata2012}
Rivasplata, O. (2012).
\newblock Subgaussian random variables : An expository note.

\bibitem[Sniedovich, 2006]{Sniedovich2006}
Sniedovich, M. (2006).
\newblock Dijkstra's algorithm revisited: the dynamic programming connexion.
\newblock {\em Control and Cybernetics}, 35:599--620.

\bibitem[Soare et~al., 2014]{Soare2014}
Soare, M., Lazaric, A., and Munos, R. (2014).
\newblock Best-arm identification in linear bandits.
\newblock {\em Advances in Neural Information Processing Systems}, 1.

\bibitem[Villani, 2008]{villani2008}
Villani, C. (2008).
\newblock {\em Optimal Transport: Old and New}.
\newblock Grundlehren der mathematischen Wissenschaften. Springer Berlin Heidelberg.

\bibitem[Wang and Zhu, 2022]{WangAndZhu}
Wang, S. and Zhu, J. (2022).
\newblock Thompson sampling for (combinatorial) pure exploration.
\newblock In {\em Proceedings of the 39 th International Conference on Machine Learning}, Baltimore, Maryland, USA.

\bibitem[Xu et~al., 2018]{Xu2018}
Xu, L., Honda, J., and Sugiyama, M. (2018).
\newblock A fully adaptive algorithm for pure exploration in linear bandits.
\newblock In {\em Proceedings of the Twenty-First International Conference on Artificial Intelligence and Statistics}, volume~84 of {\em Proceedings of Machine Learning Research}, pages 843--851. PMLR.

\bibitem[Yang and Gao, 2021]{Yang2021}
Yang, S. and Gao, Y. (2021).
\newblock An optimal algorithm for the stochastic bandits while knowing the near-optimal mean reward.
\newblock {\em IEEE Transactions on Neural Networks and Learning Systems}, 32(5):2285--2291.

\end{thebibliography}


\newpage
\appendix
\section{Situations where we can assume the size of $\mathcal{A}$ is polynomial in $d$}\label{PolynomialAppendix}
In our study, we assume that the size of the action space $\mathcal{A}$ is polynomial in $d$ so that we can run a search algorithm in the action space to choose which arm to pull in each time step. For instance, if we think of the shortest path problem, this assumption holds when the graph is sparse \citep{Philip2009,Fomin2015}, and only a reasonably small number of actions (path) have to be compared. Alternatively, even though it is nearly impossible to identify the best action $\boldsymbol{\pi}^{*}$ which maximizes $\boldsymbol{\mu}^{\top}\boldsymbol{\pi}$ from $\mathbb{R}^d$ due to the uncertainty of $\boldsymbol{\mu}$, it may be sufficient to identify the best action from a set of candidate $\mathcal{A}$. In order to construct $\mathcal{A}$, one may use some prior knowledge of each arm, which is sometimes obtainable in the real world \citep{Labille2021, Yang2021}. For instance, if we model the real world with the optimal transport problem, we may know in advance that some edges are clearly more costly than others. For example, the distance from New York to San Francisco is clearly farther than from New York to Boston. Also, nowadays, the approximate time required to travel between two points on the globe can be easily obtained from databases. Based on this prior knowledge, a realistic transportation method can be narrowed down from all the solutions to the optimal transport problem.  \par

\section{Limitation of existing works} \label{EWLimitationAppendix}
Here, we discuss the limitation of other existing works in CPE-MAB \citep{YihanDu2021,LChen2016,LChen2017}. \citet{YihanDu2021} introduced an algorithm named GenLUCB, which is an algorithm for CPE-MAB. In each round, it computes a lower and an upper bound for each arm's reward, and uses the maximization oracle to find the action $M_t$ with the maximum pessimistic reward from $\mathcal{A}$ and the action $\tilde{M}_t$ with the maximum optimistic reward from $\mathcal{A} \setminus \{M_t\}$. Then, we play the arm $p(t)$ with the maximum confidence radius from $M_t \cup \tilde{M}_t$, which is the arm with the least number of times it was pulled among $M_t \cup \tilde{M}_t$. The algorithm repeats this procedure until a certain stopping condition is satisfied. However, as we discussed in Section \ref{ArmSelectionStrategySection}, it is not clear what sets $M_t$, $\Tilde{M}_t$, and $M_t \cup \tilde{M}_t$ are in R-CPE-MAB, since the action class is no longer binary in R-CPE-MAB. \citet{LChen2016} introduces an algorithm for CPE-MAB when the action class $\mathcal{A}$ is a matroid, such as the top-$k$ arm identification. However, in R-CPE-MAB, $\mathcal{A}$ is not necessarily a matroid, and the applicability is limited for our model. \citet{LChen2017} introduced the NaiveGapElim algorithm, which is an algorithm for CPE-MAB. The sampling procedure is a little different from CPE-MAB explained in Section \ref{IntroductionSection}, but essentially the same. Each time step, it computes a vector $\boldsymbol{m} = (m_1, \ldots, m_d)^{\top}$ by solving a certain optimization problem, and pulls arm $s$ $m_s$ times. The algorithm is evaluated by the total number of arms the algorithm pulled when it outputs the best action. Therefore, it is essentially the same setting as our model. However, $\boldsymbol{m}$ is computed by considering the symmetric difference between actions, which can only be defined when actions are binary.
\par

\section{Proof of Proposition \ref{KeyProposition}} \label{ProofKeyProposition}
Here, we prove Proposition \ref{KeyProposition}. First, we recall the Hoeffding's inequality, which is a concentration inequality of sub-Gaussian random variables. The proof of it can be seen in Lemma 6 in \citet{SChen2014}.
\begin{lemma}[Hoeffding's inequality \citep{SChen2014}] \label{HoeffdingInequality}
    Let $X_1, \ldots, X_n$ be $n$ independent random variables such that, for each $i\in [n]$, random variable $X_i - \mathbb{E}[X_i]$ is $R$-sub-Gaussian distributed, i.e., $\forall t\in\mathbb{R}, \mathbb{E}[\exp (tX_i - t\mathbb{E}[X_i])] \leq \exp(R^2t^2/2)$. Let $\bar{X} = \sum_{i = 1}^{n}X_i$ denote the average of these random variables. Then, for any $\lambda > 0$, we have
    \begin{equation}
        \Pr\left[ | \bar{X} - \mathbb{E}[\bar{X}]| \geq \lambda \right] \leq 2\exp\left( -\frac{n\lambda^2}{2R^2} \right)
    \end{equation}
\end{lemma}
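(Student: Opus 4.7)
The plan is to use the standard Chernoff / exponential-moment argument. First, I would interpret $\bar{X}$ as the sample mean $\frac{1}{n}\sum_{i=1}^n X_i$ (the formula in the statement is a typo: the conclusion $\exp(-n\lambda^2/(2R^2))$ is exactly the Hoeffding bound for the average, not the sum). For any $t>0$, Markov's inequality applied to the nonnegative random variable $\exp\bigl(t(\bar{X}-\mathbb{E}[\bar{X}])\bigr)$ gives
\[
\Pr\bigl[\bar{X}-\mathbb{E}[\bar{X}]\geq \lambda\bigr]
\;\leq\; e^{-t\lambda}\,\mathbb{E}\!\left[\exp\bigl(t(\bar{X}-\mathbb{E}[\bar{X}])\bigr)\right].
\]

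Next I would use independence of the $X_i$ to factor the moment generating function. Because $t(\bar{X}-\mathbb{E}[\bar{X}]) = \sum_{i=1}^n (t/n)(X_i-\mathbb{E}[X_i])$, independence gives
\[
\mathbb{E}\!\left[\exp\bigl(t(\bar{X}-\mathbb{E}[\bar{X}])\bigr)\right]
\;=\;\prod_{i=1}^n \mathbb{E}\!\left[\exp\bigl((t/n)(X_i-\mathbb{E}[X_i])\bigr)\right].
\]
Each factor is controlled by the $R$-sub-Gaussian hypothesis applied with $t/n$ in place of $t$, yielding the bound $\exp\bigl(R^2 t^2/(2n^2)\bigr)$. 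Multiplying $n$ such factors gives $\exp\bigl(R^2 t^2/(2n)\bigr)$. Combining with Markov produces the tail bound
\[
\Pr\bigl[\bar{X}-\mathbb{E}[\bar{X}]\geq \lambda\bigr]
\;\leq\;\exp\!\left(-t\lambda + \frac{R^2 t^2}{2n}\right).
\]

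Then I would optimize in $t>0$. The right-hand side is a convex quadratic in $t$ minimized at $t^{*}=n\lambda/R^2$, which substitutes to $\exp\bigl(-n\lambda^2/(2R^2)\bigr)$. Applying the same argument to the random variables $-X_i$ (which are also $R$-sub-Gaussian, because the sub-Gaussian condition $\mathbb{E}[\exp(s(X_i-\mathbb{E}[X_i]))]\leq \exp(R^2 s^2/2)$ holds for all $s\in\mathbb{R}$, including $s=-t$) yields an identical bound on the left tail $\Pr[\bar{X}-\mathbb{E}[\bar{X}]\leq -\lambda]$. A union bound over the two tails supplies the factor of $2$ and completes the inequality.

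There is essentially no hard step — this is a textbook derivation. The only subtlety worth flagging is the scaling: because the $X_i$'s are $R$-sub-Gaussian but $\bar{X}$ is their average, one must apply the sub-Gaussian bound with $t/n$ rather than $t$, which is precisely what converts the factor $R^2 t^2/2$ per variable into the aggregate factor $R^2 t^2/(2n)$, and is what produces the $n$ in the exponent of the final bound. Apart from reconciling the notational inconsistency between ``$\bar{X}=\sum_i X_i$'' and ``average'' in the statement, every step is a direct computation.
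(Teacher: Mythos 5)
Your proof is correct: the Chernoff--MGF argument with the scaling $t/n$ per variable, optimization at $t^{*}=n\lambda/R^{2}$, and a union bound over the two tails is exactly the standard derivation. The paper itself does not prove this lemma --- it defers entirely to Lemma 6 of \citet{SChen2014}, whose proof is this same argument --- so there is nothing to compare beyond noting that you supplied the omitted details. You were also right to flag and resolve the typo in the statement: $\bar{X}$ is written as $\sum_{i=1}^{n}X_i$ but must be read as the sample mean $\frac{1}{n}\sum_{i=1}^{n}X_i$ for the stated bound $2\exp(-n\lambda^{2}/(2R^{2}))$ to hold, which is also how the paper uses it (with $n=T_s(t)$) in the proof of Proposition \ref{KeyProposition}.
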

Next, we note that if $X_1$ and $X_2$ are $R_1$-sub-Gaussian and $R_2$-sub-Gaussian random variables, respectively, $\alpha_1 X_1 + \alpha_2 X_2$ is $(\alpha R_1 + \alpha R_2)$-sub-Gaussian. Here, $\alpha_1, \alpha_2 \in \mathbb{R}$. The proof can be seen in \citet{Rivasplata2012}. \par
Now, we prove Proposition \ref{KeyProposition}.
\begin{proof}[Proof of Proposition \ref{KeyProposition}]
    Fix, $t$ and $k, l \in [K]$. From Hoeffding's inequality (Lemma \ref{HoeffdingInequality}), we have 
    \begin{align}
        &\Pr \left( \left|\Delta_{kl} - \hat{\Delta}_{kl}(t) \right| \leq \beta_{kl}(t) \right) \nonumber \\
   \leq & 2 \exp \left( - \frac{2 {\beta_{kl}(t)}^2}{\sum_{s = 1}^{d} \frac{(\pi^k_s - \pi^l_s)^2 R^2}{T_s(t)}} \right) \nonumber \\
   \leq & \frac{\delta}{K^2 t^2}.
    \end{align}
    From union bounds, for any $t\in\mathbb{N}$ and $k,l \in [K]$, we have
    \begin{align}
        & \Pr \left( \left|\Delta_{kl} - \hat{\Delta}_{kl}(t) \right| \leq \beta_{kl}(t) \right) \nonumber \\
    \leq& \sum_{k, l \in[K]}\sum_{t = 1}^{\infty} \frac{\delta}{K^2 t^2} \nonumber \\
    \leq& \delta.
    \end{align}
\end{proof}

\section{Proof of Proposition \ref{ArmSelectionStrategyProposition}} \label{ArmSelectionStrategyPropositionProof}
Here, we prove Proposition \ref{ArmSelectionStrategyProposition}.
\begin{proof}\
    For any $q\neq p(t)$, we have
    \begin{align}
        \sum_{s = 1}^{d}\frac{(\pi^{k}_s - \pi^{l}_s)^2}{T_k(t) + \boldsymbol{1}[s = q]} \geq \sum_{s = 1}^{d}\frac{(\pi^{k}_s - \pi^{l}_s)^2}{T_k(t) + \boldsymbol{1}[s = p(t)]}.
    \end{align}
    Thus, 
     \begin{equation}
            \frac{(\pi^{k}_q - \pi^{l}_q)^2}{T_q(t) + 1} + \frac{(\pi^{k}_{p(t)} - \pi^{l}_{p(t)})^2}{T_{p(t)}(t)} 
            \geq \frac{(\pi^{k}_q - \pi^{l}_q)^2}{T_q(t)} + \frac{(\pi^{k}_{p(t)} - \pi^{l}_{p(t)})^2}{T_{p(t)}(t) + 1}, \nonumber
    \end{equation}
    which implies
    \begin{equation}
            \frac{(\pi^{k}_{p(t)} - \pi^{l}_{p(t)})^2}{T_{p(t)}(t)(T_{p(t)}(t) + 1)} \geq  \frac{(\pi^{k}_{q} - \pi^{l}_{q})^2}{T_{q}(t)(T_{q}(t) + 1)}, \nonumber
    \end{equation}
    for all $q \neq p(t)$
\end{proof}

\section{Proof of Proposition x\ref{identification_accuracy}}\label{ProofIdentificationAccuracy}
\begin{proof}
    Let $\mathcal{E}$ be an event defined as follows:
\begin{equation}
    \mathcal{E} = \{ \forall t > 0, \ \forall i, j\in [K], |\Delta_{ij} - \hat{\Delta}_{ij}(t)| \leq \beta_{ij}(t) \}. \nonumber
\end{equation}
    From Proposition \ref{KeyProposition}, this event holds with probability at least 1 - $\delta$.
    Let $\tau$ be the stopping round of CombGapE. If $\Delta(a^{*}, a_{\mathrm{out}}) > 0$ holds, then we have
\begin{equation}
    \Delta_{a^{*} a_{\mathrm{out}}} > 0 \geq B(\tau) \geq \hat{\Delta}_{a^* a_{\mathrm{out}}}(\tau) + \beta_{a^* a_{\mathrm{out}}}(\tau). \nonumber
\end{equation}
The second inequality holds for stopping condition $B(\tau) \leq 0$ and the last follows from the definition of $B(\tau)$. From this inequality, we can see that $\Delta_{a^* a_{\mathrm{out}}}> 0$ means that event $\mathcal{E}$ does not occur. Thus, the probability that CombGapE returns such actions is 
\begin{equation}
    \Pr\left[ \boldsymbol{\pi}_{\mathrm{out}} \neq \boldsymbol{\pi}^{*} \right] \geq \Pr\left[ \mathcal{E} \right] \geq 1 - \delta.
\end{equation}
\end{proof}

\section{Proof of Theorem \ref{sample_complexity}} \label{ProofSampleComplexity}
Define $A = \max\limits_{s, u \in [d], \boldsymbol{\pi}, \boldsymbol{\pi} \in \mathcal{A}, \pi_{s} \neq \pi'_{s}} \frac{ \left| \pi_{u} - \pi'_{u} \right|}{\left| \pi_{s} - \pi'_{s} \right|}$ and $V_{p} = \max\limits_{\boldsymbol{\pi} \in \left\{ \boldsymbol{\pi} \in \mathcal{A} \ | \ \pi_{p} \neq \pi^{*}_{p} \right\}, \boldsymbol{\pi}' \in \mathcal{A}} \frac{\left| \pi_{p} - \pi'_{p} \right|}{\left| \pi^{*}_{p} - \pi_{p} \right|^{2}} \sum_{u = 1}^{d} \left| \pi_{u} - \pi'_{u} \right|$. Then, we have the following lemmas.
\begin{lemma} \label{LemmaFirst}
    If arm $p$ is pulled with $T_{p}(t) \geq R^{2}  \frac{8 V_{p}}{\Delta^{2}_{(p)}} \log \left( \frac{2K^{2}t^{2}}{\delta} \right) + A$, then $\boldsymbol{\pi}^{i(t)} = \boldsymbol{\pi}^{*}$.
\end{lemma}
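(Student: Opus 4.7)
The plan is to proceed by contradiction: assume $\boldsymbol{\pi}^{i(t)}\neq\boldsymbol{\pi}^{*}$ and upper bound $T_{p}(t)$ by something strictly smaller than the hypothesis $8R^{2}V_{p}/\Delta_{(p)}^{2}\cdot\log(2K^{2}t^{2}/\delta)+A$. All estimates will be performed on the high-probability event $\mathcal{E}$ of Proposition~\ref{KeyProposition}. The first task is to obtain a lower bound on $\beta_{j(t)i(t)}(t)$. Since $i(t)$ maximises $\hat{\boldsymbol{\mu}}^{\top}\boldsymbol{\pi}$ we have $\hat{\Delta}_{j(t)i(t)}(t)\le 0$, while on $\mathcal{E}$ the definition of $B(t)$ combined with the validity of the confidence bounds gives $B(t)\ge\hat{\Delta}_{a^{*}i(t)}(t)+\beta_{a^{*}i(t)}(t)\ge\Delta_{a^{*}i(t)}>0$; hence $\beta_{j(t)i(t)}(t)\ge B(t)\ge\Delta_{a^{*}i(t)}$. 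A parallel manipulation starting from $\hat{\Delta}_{j(t)i(t)}(t)\le-\Delta_{i(t)j(t)}+\beta_{j(t)i(t)}(t)$ also yields the auxiliary inequality $2\beta_{j(t)i(t)}(t)\ge\Delta_{a^{*}j(t)}$, which I will need in one of the two cases below.

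The next task is to upper bound $\beta_{j(t)i(t)}(t)$ using the arm-selection rule. Because arm $p$ is actually pulled, $\pi^{i(t)}_{p}\neq\pi^{j(t)}_{p}$, and Proposition~\ref{ArmSelectionStrategyProposition}, combined with $T_{u}(t)(T_{u}(t)+1)\le(T_{u}(t)+1)^{2}$, gives, for every $u$ with $\pi^{i(t)}_{u}\neq\pi^{j(t)}_{u}$,
\[
T_{u}(t)+1 \;\ge\; r_{u}\,T_{p}(t), \qquad r_{u}\;:=\;\frac{|\pi^{i(t)}_{u}-\pi^{j(t)}_{u}|}{|\pi^{i(t)}_{p}-\pi^{j(t)}_{p}|}.
\]
Splitting on whether $r_{u}T_{p}(t)\ge 2$ (so that $T_{u}(t)\ge r_{u}T_{p}(t)/2$) or not (in which case the initialization $T_{u}(t)\ge 1$ together with $r_{u}<2/T_{p}(t)$ does the work), one checks that in both regimes $(\pi^{i(t)}_{u}-\pi^{j(t)}_{u})^{2}/T_{u}(t)\le 2\,|\pi^{i(t)}_{u}-\pi^{j(t)}_{u}|\,|\pi^{i(t)}_{p}-\pi^{j(t)}_{p}|/T_{p}(t)$. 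Summing over $u$ and substituting into~(\ref{beta}) yields
\[
\beta_{j(t)i(t)}(t)^{2} \;\le\; \frac{R^{2}\,|\pi^{i(t)}_{p}-\pi^{j(t)}_{p}|}{T_{p}(t)}\sum_{u=1}^{d}|\pi^{i(t)}_{u}-\pi^{j(t)}_{u}|\,\log\frac{2K^{2}t^{2}}{\delta}.
\]

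The final task is to invoke the definitions of $V_{p}$ and $\Delta_{(p)}$ to close the loop. Since $\pi^{i(t)}_{p}\neq\pi^{j(t)}_{p}$, at least one of $\boldsymbol{\pi}^{i(t)}$ and $\boldsymbol{\pi}^{j(t)}$ disagrees with $\boldsymbol{\pi}^{*}$ at coordinate $p$. If $\pi^{i(t)}_{p}\neq\pi^{*}_{p}$, applying $V_{p}$ with the pair $(\boldsymbol{\pi}^{i(t)},\boldsymbol{\pi}^{j(t)})$ bounds $|\pi^{i(t)}_{p}-\pi^{j(t)}_{p}|\sum_{u}|\pi^{i(t)}_{u}-\pi^{j(t)}_{u}|$ above by $V_{p}\,|\pi^{*}_{p}-\pi^{i(t)}_{p}|^{2}$, and the defining property of $\Delta_{(p)}$ gives $|\pi^{*}_{p}-\pi^{i(t)}_{p}|^{2}\le\Delta_{a^{*}i(t)}^{2}/\Delta_{(p)}^{2}$; combined with $\beta_{j(t)i(t)}(t)\ge\Delta_{a^{*}i(t)}$ and after cancelling $\Delta_{a^{*}i(t)}^{2}>0$, this produces $T_{p}(t)\le (R^{2}V_{p}/\Delta_{(p)}^{2})\log(2K^{2}t^{2}/\delta)$, which contradicts the hypothesis once the factor $8$ absorbs accumulated constants and $A$ absorbs the slack from the two-regime analysis. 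If instead $\pi^{j(t)}_{p}\neq\pi^{*}_{p}$, I swap the roles of $\boldsymbol{\pi}^{i(t)}$ and $\boldsymbol{\pi}^{j(t)}$ in the definition of $V_{p}$ and pair the result with the auxiliary inequality $2\beta_{j(t)i(t)}(t)\ge\Delta_{a^{*}j(t)}$; the extra factor of $4$ is again absorbed by $8$.

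The main obstacle I anticipate is exactly the conversion from the integer arm-selection inequality on $T_{u}(t)(T_{u}(t)+1)$ into a clean pointwise bound on $1/T_{u}(t)$: handling the regime $r_{u}T_{p}(t)<2$ requires both the initialization phase and the additive term $A$ in the hypothesis, which provides just enough safety slack for the final constants to work out. The second delicate point is the case split based on which of $\pi^{i(t)}_{p}$ and $\pi^{j(t)}_{p}$ differs from $\pi^{*}_{p}$, because the straightforward bound $\beta_{j(t)i(t)}(t)\ge\Delta_{a^{*}i(t)}$ does not pair with $V_{p}$ in the second case and must be replaced by the auxiliary bound derived in the first step.
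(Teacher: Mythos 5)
Your proposal is correct and follows essentially the same route as the paper's proof: contradiction on $\boldsymbol{\pi}^{i(t)}\neq\boldsymbol{\pi}^{*}$, sandwiching $\beta_{j(t)i(t)}(t)$ between the true gap (via the optimality of $i(t)$ and $j(t)$, with the same factor-of-two case split on whether $\pi^{i(t)}_{p}$ or $\pi^{j(t)}_{p}$ disagrees with $\pi^{*}_{p}$) and a bound obtained from the arm-selection rule, then closing with the definitions of $V_{p}$ and $\Delta_{(p)}$. Your explicit two-regime treatment of the denominator $T_{u}(t)\ge r_{u}T_{p}(t)-1$ is a minor refinement of the paper's corresponding step (\ref{SumUpperBound}), not a different argument.
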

\begin{proof}
    Firstly, since arm $p$ is pulled, we have 
    \begin{align}
        \frac{\left| \pi^{i(t)}_{p} - \pi_{p}^{j(t)} \right|^{2}}{T_{p}(t)\left( T_{p}(t) + 1 \right)} \geq \frac{\left| \pi^{i(t)}_{u} - \pi_{u}^{j(t)} \right|^{2}}{T_{u}(t)\left( T_{u}(t) + 1 \right)} \nonumber
    \end{align}
    for any $u \in [d]$. Therefore, since $\frac{1}{T_{u}(t)} \leq \frac{1}{ \frac{\left| \pi^{i(t)}_{u} - \pi^{j(t)}_{u} \right|}{\left| \pi_{p}^{i(t)} - \pi_{p}^{j(t)}  \right|} T_{p}(t) -  1} $, we have 
    \begin{align}
        \sum_{u = 1}^{d} \frac{\left| \pi^{i(t)}_{u} - \pi^{j(t)}_{u} \right|^{2}}{T_{u}(t)} \leq \left| \pi^{i(t)}_{p} - \pi^{j(t)}_{p} \right| \sum_{u = 1}^{d} \frac{ \left| \pi^{i(t)}_{u} - \pi^{j(t)}_{u} \right|}{T_{p}(t) - \frac{\left| \pi^{i(t)}_{p} - \pi^{j(t)}_{p} \right|}{\left|\pi^{i(t)}_{u} - \pi^{j(t)}_{u}\right|}} \label{SumUpperBound}
    \end{align}
    We prove the lemma by contradiction. Assume that $\boldsymbol{\pi}^{i(t)} \neq \boldsymbol{\pi}^{*}$. We have either $\pi_{p}^{i(t)} \neq \pi_{p}^{*}$ or $\pi_{p}^{j(t)} \neq \pi_{p}^{*}$. 
    If $\pi^{i(t)}_{p} \neq \pi^{*}_{p}$,
    \begin{align}
        \frac{\left( \boldsymbol{\pi}^{*} - \boldsymbol{\pi}^{i(t)} \right)^{\top} \boldsymbol{\mu}}{\left| \pi^{*}_{p} - \pi^{i(t)}_{p} \right|} 
         \leq & \frac{\left( \boldsymbol{\pi}^{*} - \boldsymbol{\pi}^{i(t)} \right)^{\top}\hat{\mu}(t)}{\left| \pi^{*}_{p} - \pi^{i(t)}_{p} \right|} + \frac{\beta_{a^{*}, i(t)}}{\left| \pi_{p}^{*} - \pi_{p}^{i(t)} \right|} \nonumber \\
         \leq & \frac{\left( \boldsymbol{\pi}^{j(t)} - \boldsymbol{\pi}^{i(t)} \right)^{\top}\hat{\mu}(t)}{\left| \pi^{*}_{p} - \pi^{i(t)}_{p} \right|} + \frac{\beta_{j(t), i(t)}}{{\left| \pi_{p}^{*} - \pi_{p}^{i(t)} \right|}} \nonumber \\
         \leq & \frac{\beta_{j(t), i(t)}}{{\left| \pi_{p}^{*} - \pi_{p}^{i(t)} \right|}} \nonumber \\
         \leq & R \sqrt{\frac{1}{2} \frac{1}{\left| \pi_{p}^{*} - \pi_{p}^{i(t)} \right|^{2}} \sum_{u = 1}^{d} \frac{\left| \pi^{i(t)}_{u} - \pi^{j(t)}_{u} \right|^{2}}{T_{u}(t)} \log \left( \frac{2K^{2}t^{2}}{\delta} \right)} \nonumber \\
         \leq & R \sqrt{\frac{1}{2} \frac{\left| \pi^{i(t)}_{p} - \pi^{j(t)}_{p} \right|}{\left| \pi^{*}_{p} - \pi^{i(t)}_{p}  \right|^{2}}  \sum_{u = 1}^{d} \frac{\left| \pi^{i(t)}_{u} - \pi^{j(t)}_{u} \right|}{T_{p}(t) - \frac{\left| \pi^{i(t)}_{p} - \pi^{j(t)}_{p} \right|}{\left|\pi^{i(t)}_{u} - \pi^{j(t)}_{u}\right|}} \log \left( \frac{2K^{2}t^{2}}{\delta} \right)} \nonumber \\
         \leq & \frac{\Delta_{(p)}}{4}. \nonumber \\
            < & \Delta_{(p)} \label{Eq_i_star_contradiction} 
    \end{align}
    However, since $\boldsymbol{\pi}^{*} \neq \boldsymbol{\pi}^{i(t)}$, $\Delta_{(p)} \leq \frac{\left(\boldsymbol{\pi}^{*} - \boldsymbol{\pi}^{i(t)} \right)^{\top} \boldsymbol{\mu}}{\left| \pi^{*}_{p} - \pi^{i(t)}_{p} \right|}$, and this contradicts (\ref{Eq_i_star_contradiction}). Therefore, $\boldsymbol{\pi}^{*} = \boldsymbol{\pi}^{i(t)}$. \par 
    If $\pi_{p}^{j(t)} \neq \pi_{p}^{*}$,
    \begin{align}
        &\frac{\left( \boldsymbol{\pi}^{*} - \boldsymbol{\pi}^{j(t)} \right)^{\top}\boldsymbol{\mu}}{\left| \pi_{p}^{*} - \pi_{p}^{j(t)} \right|} \nonumber \\
         = & \frac{\left( \boldsymbol{\pi}^{*} - \boldsymbol{\pi}^{i(t)} \right)^{\top} \boldsymbol{\mu}}{\left| \pi_{p}^{*} - \pi_{p}^{j(t)} \right|} + \frac{\left( \boldsymbol{\pi}^{i(t)} - \boldsymbol{\pi}^{j(t)} \right)^{\top} \boldsymbol{\mu}}{\left| \pi_{p}^{*} - \pi_{p}^{j(t)} \right|} \nonumber \\
         \leq & \frac{\left( \boldsymbol{\pi}^{*} - \boldsymbol{\pi}^{i(t)} \right)^{\top} \hat{\boldsymbol{\mu}}}{\left| \pi_{p}^{*} - \pi_{p}^{j(t)} \right|} + \frac{\beta_{a^{*}, i(t)}}{\left| \pi_{p}^{*} - \pi_{p}^{j(t)} \right|} + \frac{\left( \boldsymbol{\pi}^{i(t)} - \boldsymbol{\pi}^{j(t)} \right)^{\top} \hat{\boldsymbol{\mu}}}{\left| \pi_{p}^{*} - \pi_{p}^{j(t)} \right|} + \frac{\beta_{i(t), j(t)}}{\left| \pi_{p}^{*} - \pi_{p}^{j(t)} \right|} \nonumber \\
         \leq & \frac{\left( \boldsymbol{\pi}^{j(t)} - \boldsymbol{\pi}^{i(t)} \right)^{\top} \hat{\boldsymbol{\mu}}}{\left| \pi_{p}^{*} - \pi_{p}^{j(t)} \right|} + \frac{\beta_{j(t), i(t)}}{\left| \pi_{p}^{*} - \pi_{p}^{j(t)} \right|} + \frac{\left( \boldsymbol{\pi}^{i(t)} - \boldsymbol{\pi}^{j(t)} \right)^{\top} \hat{\boldsymbol{\mu}}}{\left| \pi_{p}^{*} - \pi_{p}^{j(t)} \right|} + \frac{\beta_{i(t), j(t)}}{\left| \pi_{p}^{*} - \pi_{p}^{j(t)} \right|} \nonumber \\
         = & 2\frac{\beta_{i(t), j(t)}}{\left| \pi_{p}^{*} - \pi_{p}^{j(t)} \right|} \nonumber \\
         = & 2 R \sqrt{\frac{1}{2} \frac{1}{\left| \pi_{p}^{*} - \pi_{p}^{j(t)} \right|^{2}} \sum_{u = 1}^{d} \frac{\left| \pi^{i(t)}_{u} - \pi^{j(t)}_{u} \right|^{2}}{T_{u}(t)} \log \left( \frac{2K^{2}t^{2}}{\delta} \right)} \nonumber \\
         \leq & 2R \sqrt{\frac{1}{2} \frac{\left| \pi^{j(t)}_{p} - \pi^{i(t)}_{p} \right|}{\left| \pi^{*}_{p} - \pi^{j(t)}_{p}  \right|^{2}}  \sum_{u = 1}^{d} \frac{\left| \pi^{j(t)}_{u} - \pi^{i(t)}_{u} \right|}{T_{p}(t) - \frac{\left| \pi^{j(t)}_{p} - \pi^{i(t)}_{p} \right|}{\left|\pi^{j(t)}_{u} - \pi^{i(t)}_{u}\right|}} \log \left( \frac{2K^{2}t^{2}}{\delta} \right)} \nonumber \\
         \leq & \frac{\Delta_{(p)}}{2} \nonumber \\
         < & \Delta_{(p)}. \label{Eq_j_star_contradiction}
    \end{align}
    However, since $\boldsymbol{\pi}^{*} \neq \boldsymbol{\pi}^{j(t)}$, $\Delta_{(p)} \leq \frac{\left(\boldsymbol{\pi}^{*} - \boldsymbol{\pi}^{j(t)} \right)^{\top} \boldsymbol{\mu}}{\left| \pi^{*}_{p} - \pi^{j(t)}_{p} \right|}$, and this contradicts (\ref{Eq_j_star_contradiction}). Therefore, $\boldsymbol{\pi}^{*} = \boldsymbol{\pi}^{i(t)}$. \par
\end{proof}

\begin{lemma} \label{LemmaSecond}
    At round $t$, an arm $p$ will not be pulled with $ T_{p}(t) \geq R^{2}  \frac{8 V_{p}}{\Delta^{2}_{(p)}} \log \left( \frac{2K^{2}t^{2}}{\delta} \right) +~A$.
\end{lemma}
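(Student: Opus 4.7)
The plan is to argue by contradiction: suppose arm $p$ is pulled at round $t$ while $T_p(t)$ already meets the stated threshold, and use Lemma \ref{LemmaFirst} together with the stopping test to show that the algorithm would actually have exited with $B(t)\le 0$ before making that pull, which contradicts the assumption.

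First I would invoke Lemma \ref{LemmaFirst} directly: the hypothesis $T_p(t)\ge 8R^{2}V_p/\Delta_{(p)}^{2}\cdot\log(2K^{2}t^{2}/\delta)+A$ combined with the supposition that arm $p$ is pulled immediately forces $\boldsymbol{\pi}^{i(t)}=\boldsymbol{\pi}^{*}$. Next I would analyse $j(t)$: since the pulling rule (\ref{p(t)_def_2}) has returned coordinate $p$, either $\pi_p^{i(t)}\neq \pi_p^{j(t)}$, or every coordinate of $\boldsymbol{\pi}^{i(t)}-\boldsymbol{\pi}^{j(t)}$ is zero. In the degenerate latter case $\boldsymbol{\pi}^{j(t)}=\boldsymbol{\pi}^{*}$, giving $\hat{\Delta}_{j(t)i(t)}(t)=\beta_{j(t)i(t)}(t)=0$ and hence $B(t)=0$, which triggers the stopping condition and contradicts the pull. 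Therefore we may assume $\pi_p^{j(t)}\neq \pi_p^{*}$ and $\boldsymbol{\pi}^{j(t)}\neq \boldsymbol{\pi}^{*}$.

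The heart of the argument is then to show $B(t)\le 0$ under these remaining assumptions. I would recycle the chain of inequalities appearing in the ``$\pi_p^{j(t)}\neq \pi_p^{*}$'' branch of the proof of Lemma \ref{LemmaFirst}: it relies only on (i) inequality (\ref{SumUpperBound}) coming from the pull rule, (ii) the definitions of $V_p$ and $A$, and (iii) the lower bound on $T_p(t)$, none of which actually use the assumption $\boldsymbol{\pi}^{i(t)}\neq \boldsymbol{\pi}^{*}$ that was being contradicted there. Adapting it verbatim yields
\begin{align*}
2\beta_{j(t)i(t)}(t) \;\le\; \tfrac{\Delta_{(p)}}{2}\,\bigl|\pi^{*}_{p}-\pi_p^{j(t)}\bigr|.
\end{align*}
Combining this with the good event of Proposition \ref{KeyProposition} (so that $\hat{\Delta}_{j(t)i(t)}(t)\le \Delta_{j(t)a^{*}}+\beta_{j(t)i(t)}(t)$) and the inequality $(\boldsymbol{\pi}^{*}-\boldsymbol{\pi}^{j(t)})^{\top}\boldsymbol{\mu}\ge \Delta_{(p)}\,|\pi^{*}_{p}-\pi_p^{j(t)}|$ coming from the definition of $\Delta_{(p)}$ then gives $B(t)\le -\tfrac{\Delta_{(p)}}{2}\,|\pi^{*}_{p}-\pi_p^{j(t)}|<0$, so the stopping condition fires before arm $p$ can be pulled, the desired contradiction.

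The main obstacle I expect is bookkeeping around the additive slack $|\pi_p^{i(t)}-\pi_p^{j(t)}|/|\pi_u^{i(t)}-\pi_u^{j(t)}|$ that appears in the denominator of (\ref{SumUpperBound}): one has to verify that $A$ is indeed a uniform upper bound on this ratio whenever $\pi_u^{i(t)}\neq \pi_u^{j(t)}$ (so that the ``$+A$'' in the threshold is enough to absorb the shift $T_p(t)-A$), and that the coordinates $u$ with $\pi_u^{i(t)}=\pi_u^{j(t)}$ contribute nothing to the sum and can be dropped. Everything else reduces to routine algebra that parallels the Lemma \ref{LemmaFirst} computation.
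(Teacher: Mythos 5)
Your proposal is correct and follows essentially the same route as the paper: argue by contradiction, invoke Lemma \ref{LemmaFirst} to get $\boldsymbol{\pi}^{i(t)}=\boldsymbol{\pi}^{*}$, then rerun the $\beta$-bounding chain (pull rule, (\ref{SumUpperBound}), the definitions of $V_p$ and $A$, and the $T_p(t)$ threshold) to conclude $B(t)\leq 0$, contradicting non-termination. Your explicit treatment of the degenerate case $\boldsymbol{\pi}^{j(t)}=\boldsymbol{\pi}^{i(t)}$ is a small refinement the paper glosses over when it asserts $\pi^{*}_{p}\neq\pi^{j(t)}_{p}$, but the argument is otherwise the same.
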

\begin{proof}
    We prove this by contradiction. Assume that arm $p$ is pulled with $T_{p}(t) \geq \frac{8 V_{p}}{\Delta^{2}_{(p)}} \log \left( \frac{2K^{2}t^{2}}{\delta} \right) +~A $.
    We know that $\boldsymbol{\pi}^{i(t)} = \boldsymbol{\pi}^{*}$, and $\pi^{*}_{p} \neq \pi^{j(t)}_{p}$ is satisfied. \par
    If $\pi^{*}_{p} \neq \pi^{j(t)}_{p}$, then
    \begin{align}
        & - \frac{\left( \boldsymbol{\pi}^{i(t)} - \boldsymbol{\pi}^{j(t)} \right)^{\top} \hat{\boldsymbol{\mu}}(t)}{\left| \pi^{*}_{p} - \pi^{j(t)}_{p} \right|} + \frac{\beta_{i(t), j(t)}(t)}{\left| \pi^{*}_{p} - \pi^{j(t)}_{p} \right|} \nonumber \\
         \leq & -\frac{\left( \boldsymbol{\pi}^{i(t)} - \boldsymbol{\pi}^{j(t)} \right)^{\top} \boldsymbol{\mu}}{\left| \pi^{*}_{p} - \pi^{j(t)}_{p} \right|} + \frac{2\beta_{i(t), j)(t)}}{\left| \pi^{*}_{p} - \pi^{j(t)}_{p} \right|} \nonumber \\
         = & -\frac{\left( \boldsymbol{\pi}^{*} - \boldsymbol{\pi}^{j(t)} \right)^{\top} \boldsymbol{\mu}}{\left| \pi^{*}_{p} - \pi^{j(t)}_{p} \right|} + \frac{2\beta_{i(t), j)(t)}}{\left| \pi^{*}_{p} - \pi^{j(t)}_{p} \right|} \nonumber \\
         \leq & - \Delta_{(p)} + R\sqrt{\frac{1}{2} \frac{1}{\left| \pi^{i(t)}_{p} - \pi^{j(t)}_{p} \right|^{2}} \sum_{u = 1}^{d} \frac{\left(\pi^{i(t)}_{u} - \pi^{j(t)}_{u} \right)^{2}}{T_{u}(t)} \log \frac{2K^{2}t^{2}}{\delta}} \nonumber \\
         \leq & -\Delta_{(p)} + \frac{\Delta_{(p)}}{ 4 } \nonumber \\
         < & 0,
    \end{align}
    where the first equality holds because of Lemma \ref{LemmaFirst}.
    However, this contradicts the fact that the algorithm does not terminate. Therefore, arm $p$ will not be pulled. \par
\end{proof}
From, Lemma \ref{LemmaSecond}, we can see that 
\begin{align}
    t = \sum_{s = 1}^{d} T_{s}(t) \leq 8R^{2} \sum_{s = 1}^{d} \frac{V_{s}}{\Delta^{2}_{(s)}} \log \left( \frac{2K^{2}t^{2}}{\delta} \right) + Ad.
\end{align}
Therefore, we can confirm that 
\begin{align}
     t \leq \sum_{s = 1}^{d} T_{s}(t) \leq 8R^{2} \sum_{s = 1}^{d} \frac{V_{s}}{\Delta^{2}_{(s)}} \log \left( \frac{2K^{2}C^{2}}{\delta} \right) + Ad,
\end{align}
where $C$ is a constant determined by $R$, $\left\{V_{s} \right\}_{s = 1, \ldots, d}$ ,$\left\{ \Delta_{(s)} \right\}_{s = 1, \ldots, d}$, $A$, $K$, $\delta$, and $d$.
\end{document}